\documentclass[times, review, 10pt]{elsarticle}
\usepackage{amsmath,amsfonts}
\usepackage{algorithmic}
\usepackage{algorithm}
\usepackage{array}
\usepackage[caption=false,font=normalsize,labelfont=sf,textfont=sf]{subfig}
\usepackage{textcomp}
\usepackage{stfloats}
\usepackage{url}
\usepackage{verbatim}
\usepackage{graphicx}

\usepackage{amssymb}
\usepackage{amsmath}
\DeclareMathOperator*{\argmax}{argmax}

\newcommand{\card}[1]{|#1|}
\newcommand{\mat}[1]{\boldsymbol{#1}}
\newcommand{\note}[1]{\text{#1}}
\newcommand{\set}[1]{\mathcal{#1}}

\usepackage[automake,acronym, nohypertypes={acronym}]{glossaries}
\makeglossaries
\newacronym{ldam}{LDAM}{Label-Distribution-Aware Margin}
\newacronym{drw}{DRW}{Deferred Re-weighting}
\newacronym{ldamdrw}{LDAM-DRW}{Label-Distribution-Aware Margin with Deferred Re-weighting}
\newacronym{ride}{RIDE}{Routing Diverse Distribution-Aware Experts}
\newacronym{reslt}{ResLT}{Residual Learning for Long-tailed Recognition}
\newacronym{lws}{LWS}{Learnable Weight Scaling}
\newacronym{crt}{cRT}{Classifier Re-training}
\newacronym{ce}{CE}{Cross Entropy}
\newacronym{nc}{NC}{Neural Collapse}
\newacronym{tpt}{TPT}{Terminal Phase of Training}
\newacronym{etf}{ETF}{Equiangular Tight Frame}
\newacronym{ncc}{NCC}{Nearest Class Center}
\newacronym{mse}{MSE}{Mean Square Error}
\newacronym{dbt}{DBT}{Doubly Block-Toeplitz}
\newacronym{roi}{RoI}{Region of Interest}
\newacronym{fcn}{FCN}{Fully Convolutional Network}
\newacronym{relu}{ReLU}{Rectified Linear Unit}
\newacronym{sgd}{SDG}{Stochastic Gradient Descent}
\newacronym{bn}{BN}{Batch Normalization}
\newacronym{cnn}{CNN}{Convolutional Neural Network}
\newacronym{cb}{CB}{Class-Balanced}
\newacronym{sade}{SADE}{Self-supervised Aggregation of Diverse Experts}
\newacronym{smote}{SMOTE}{Synthetic Minority Over-sampling Technique}
\newacronym{wp}{WP}{Work Package}
\newacronym{mlp}{MLP}{Multi-Layer Perceptron}
\newacronym{moco}{MoCo}{Momentum Contrast}
\newacronym{gpaco}{GPaCo}{Generalised Parametric Contrastive Learning}
\newacronym{gcr}{GCR}{Grassmann Class Representation}
\newacronym{bcl}{BCL}{Balanced Contrastive Learning}
\newacronym{svm}{SVM}{Support Vector Machine}
\newacronym{hog}{HOG}{Histogram of Oriented Gradients}
\newacronym{sift}{SIFT}{Scale-invariant Feature Transform}
\newacronym{cbs}{CBS}{Class-Balanced Softmax}
\newacronym{std}{SD}{Standard Deviation}

\usepackage{makecell}
\usepackage{changepage,threeparttable} 
\usepackage{array,booktabs,multirow}
\usepackage[dvipsnames]{xcolor}
\usepackage{colortbl}

\newcommand{\rev}[1]{#1}
\newcommand{\mrev}[1]{#1}

\usepackage{amsthm}

\newtheorem{thm}{Theorem}[section]
\newtheorem{corol}{Corollary}[thm]

\usepackage[shortlabels,inline]{enumitem}

\setlist[itemize]{noitemsep,leftmargin=*,topsep=0em}
\setlist[enumerate]{noitemsep,leftmargin=*,topsep=0em}

\usepackage[colorlinks=true,linkcolor=blue,citecolor=blue,urlcolor=blue]{hyperref}
\usepackage[capitalize]{cleveref}

\crefname{section}{Sec.}{Secs.}
\Crefname{section}{Section}{Sections}
\crefname{table}{Tab.}{Tabs.}
\Crefname{table}{Table}{Tables}
\Crefname{thm}{Theorem}{Theorem}
\Crefname{corol}{Corollary}{Corollary}

\journal{xxxxxx}
\begin{document}
\begin{frontmatter}



\title{\texorpdfstring{\acrlong{cbs}}{cbs}: A Bayes Theory–Based Method for Long-Tailed Recognition}

\author[label1]{Yi-Hang Zhu}

\author[label1]{Rajeev Raman}

\author[label2,label3]{Shiqi Su}

\author[label4]{Jianyuan Sun}

\author[label1]{Xinyu Yang}

\author[label5]{Nan Xing}

\author[label1]{Huiyu Zhou\corref{cor1}}

\affiliation[label1]{organization={School of Computing and Mathematical Sciences, University of Leicester},
            city={Leicester},
            postcode={LE1 7RH},
            country={UK}}

\affiliation[label2]{organization={Department of Physics and Astronomy, University of Leicester},
            city={Leicester},
            postcode={LE1 7RH},
            country={UK}}

\affiliation[label3]{organization={Scientific Computing, Rutherford Appleton Laboratory, Science and Technology Facilities Council},
            city={Didcot},
            postcode={OX11 0QX},
            country={UK}}

\affiliation[label4]{organization={School of Computer Science and Informatics, De Montfort University},
            city={Leicester},
            postcode={LE1 9BH},
            country={UK}}

\affiliation[label5]{organization={School of Automation and Information Engineering, Xi’an
University of Technology},
            city={Xi'an},
            postcode={710048},
            country={China}}
\cortext[cor1]{Corresponding author, e-mail: hz143@leicester.ac.uk}

\begin{abstract}
Deep learning models using traditional softmax classifiers have achieved remarkable success in various classification tasks. However, their performance degrades significantly on imbalanced datasets. Although Balanced Softmax is widely adopted as a state-of-the-art rebalancing method, it possesses inherent limitations, such as yielding disproportionately lower testing accuracy for tail classes. \mrev{To mitigate these shortcomings, we propose the \gls{cbs}. Rooted in a theoretical Bayesian framework and a heuristic power-law assumption, the \gls{cbs} is a simple logit adjustment that is computationally inexpensive and easily integrated into existing pipelines.} Furthermore, we characterise a fundamental phenomenon in models trained on imbalanced data, termed {\it the preference issue}, wherein models exhibit higher training error and a larger generalisation gap for classes with limited data. To quantify this issue, we introduce a novel metric and demonstrate that \gls{cbs} effectively mitigates the preference issue. \mrev{Extensive experiments on large-scale benchmarks show that \gls{cbs} is highly scalable and outperforms existing methods, including Balanced Softmax.}
\end{abstract}


\glsresetall

\begin{keyword}
Long-tailed \sep Imbalanced Classification \sep Imbalanced Learning \sep Preference Issue \sep Softmax
\end{keyword}
\end{frontmatter}

\glsresetall
\section{Introduction}
The potential of deep learning models was awakened by the ImageNet challenges~\cite{deng2009imagenet} held in 2012. Since then, deep learning models have achieved impressive performance in various computer vision tasks, including image classification~\cite{krizhevsky2012imagenet,he2016deep,dosovitskiy2021an} and image segmentation~\cite{he2017mask}. This paper focuses on image classification, which plays a crucial role in various tasks, including object detection~\cite{wang2023yolov7} and instance segmentation~\cite{he2017mask}.

\rev{In the image classification community, the datasets under investigation often suffer from imbalance, i.e., some classes have significantly more training samples than others~\cite{van2018inaturalist,gupta2019lvis}. Often, the training data exhibits a long-tailed distribution~\cite{van2018inaturalist,gupta2019lvis}.}
Following the existing literature, we use the terms {\it head classes}, {\it medium classes} and {\it tail classes} to split the classes in a given dataset into three groups. Head classes have the largest number of training samples per class, while tail classes have the lowest.
\mrev{It has been observed that models trained on imbalanced data tend to provide a higher recall on the testing data for head classes compared to tail classes~\cite{zhang2023deep}, referred to as the {\it imbalanced testing performance}.}
This paper focuses on imbalanced classification. The goal is to mitigate the impact of imbalanced data on the model training and ensure the model achieves high accuracy for all classes on the testing data.

To handle imbalanced classification, a variety of methods have been proposed in the existing literature~\cite{yang2022survey}.
In general, the methods can be categorised into four types.
\begin{enumerate*}
   [label={(\arabic*)}]
   \item {Rebalancing methods}~\cite{cao2019learning,Ren2020balms,zhang2025class}: \rev{rebalancing methods aim to mitigate the impact of the imbalanced training data on model training. This is the key issue to be addressed in the field of imbalanced learning. As we discuss in~\Cref{sec:related_work}, the existing rebalancing methods often rely on restrictive theoretical assumptions or reduce training data diversity, which limits their performance.}
   \item {Representation learning methods}~\cite{wang2020long,cui2023generalized,tan2024ncl,liu2024lcreg}: these methods focus on designing the model structure or loss function to enable the model to use the available data more effectively for training and thus improve the quality of representations. This will improve the model performance for all the classes. It is particularly important for tail classes, which have few training samples.
   However, these methods typically require additional trainable parameters and still rely on rebalancing methods as their classifier~\cite{wang2020long,cui2023generalized,tan2024ncl}.
   \item {Data augmentation}~\cite{chawla2002smote,zhang2018mixup,xiang2023margin,baik2024dbn}: these methods aim to increase the diversity of training data and are commonly applied as a default setup during image preprocessing.
   \item {Fixed linear classifiers}~\cite{yang2022inducing,peifeng2023feature}: the term {\it classifier} here refers to the last linear layer in the deep learning models~\cite{Kang2020Decoupling}. The methods of this type generate values for the linear classifier weights and fix them to avoid the models expressing their preference towards head classes via the linear classifier weights. However, it is challenging to generate high-quality values for the weights~\cite{yang2022inducing}.
\end{enumerate*}

To address the challenges in imbalanced classification, in this paper, we make three contributions to the imbalanced learning community, as follows.
\rev{
\begin{itemize}
   \item
   \mrev{Based on a theoretical Bayesian framework and heuristic power-law functions, we propose \gls{cbs} for imbalanced classification. \gls{cbs} is a simple, computationally inexpensive logit adjustment method that can be seamlessly integrated into existing deep learning pipelines with negligible computational overhead and zero additional trainable parameters. Specifically, \gls{cbs} introduces a parameterised power-law calibration hyperparameter, $\beta \ge 0$, which modifies the raw logit $z_{nc}$ output by the model for sample $n$ and class $c$ to $z_{nc} + \beta \log \card{\set{N}_{c}}$ (see \cref{eq:cbs}), where $\card{\set{N}_{c}}$ represents the training sample count in class $c$. This formulation directly generalises Balanced Softmax, which represents a rigid special case when $\beta = 1$.}
   \item \mrev{To the best of our knowledge, this work is the first to characterise {\it the preference issue} as an inherent model symptom that provides a more fundamental insight into the aforementioned imbalanced testing performance of models trained on imbalanced data. This issue manifests as a disproportionately higher training error and larger generalisation gap for tail classes compared to head classes.
   To quantify this issue, we formulate a novel metric, referred to as {\it model imbalance level} ($I$), and demonstrate that our proposed \gls{cbs} mitigates the preference issue more effectively than state-of-the-art methods across diverse benchmarks.}
   \item Extensive experiments are conducted on various long-tailed benchmarks, including large-scale datasets such as ImageNet-LT (1000 classes)~\cite{openlongtailrecognition2019}, iNaturalist2018 (8142 classes)~\cite{van2018inaturalist} and an LVIS-based dataset with extreme imbalance~\cite{gupta2019lvis}. \mrev{The results demonstrate that \gls{cbs} is highly scalable and outperforms existing methods, including the state-of-the-art Balanced Softmax~\cite{Ren2020balms}.} Furthermore, we reveal the fundamental limitations of softmax-based models in imbalanced settings through gradient-based theoretical analysis and empirical validation.
\end{itemize}
}

This paper is organised as follows. \Cref{sec:related_work} discusses the representative existing rebalancing methods. \Cref{sec:ib_train_data_impact} studies the impact of imbalanced training data on the gradients and model performance. \Cref{sec:gbs} presents our \gls{cbs}. \Cref{sec:experiments} provides experimental results, which confirm our analysis and demonstrate the performance of our \gls{cbs}. \Cref{sec:conclusion} concludes the work.

\section{Related work}~\label{sec:related_work}
Rebalancing methods aim to mitigate the impact of the sample number difference between classes on the model.
The existing rebalancing methods include mainly three types: resampling-based methods, reweighting-based methods and post-hoc correction methods.
Both resampling-based methods and reweighting-based methods are applied for model training, while the post-hoc correction methods are applied only during the classification decisions.

\paragraph{Resampling-based methods}
When training a model, samples are first selected, and then the selected samples are fed into the model for training.
Resampling-based methods aim to balance the number of samples selected for each class.
Undersampling and oversampling are the two most classic resampling methods~\cite{zhou2005training}.
Undersampling is no longer attractive due to inefficient training data use.
By contrast, oversampling remains popular and was referred to as class-balanced sampling by~\cite{Kang2020Decoupling}. It is simple to implement and provides decent performance.
\cite{Kang2020Decoupling} also discussed different sampling methods and proposed the decoupled training framework, which has been applied in many later studies~\cite{yang2022survey}.
The \gls{crt} proposed by~\cite{Kang2020Decoupling} combines class-balanced sampling and decoupled training.
\rev{Resampling-based methods rebalance the training data at the expense of reducing data diversity. However, as demonstrated by~\cite{Kang2020Decoupling}, maintaining maximum data diversity is essential for achieving high-quality representation learning in long-tailed recognition. By reducing training data diversity, resampling-based methods often compromise the learned representations and limit overall performance.}

\paragraph{Reweighting-based methods}
\rev{Reweighting methods are designed by incorporating weights into the loss function. These weights can be assigned per sample, per class, or even per group of classes. Focal Loss~\cite{lin2017focal} is one of the earliest reweighting-based methods, with weights calculated for each sample based on its softmax output. Samples with lower prediction probabilities are assigned higher weights. Consequently, this approach is often referred to as hard sample mining. While tail classes typically contain a higher proportion of hard samples, head classes also contain a significant number of them, which limits the rebalancing effect of Focal Loss.}

\rev{
Most reweighting methods determine weights based on the number of training samples per class. Representative works include~\cite{Kang2020Decoupling,cao2019learning,Ren2020balms}. In~\cite{cao2019learning}, the authors proposed \gls{ldam}. However, as noted by~\cite{Ren2020balms}, \gls{ldam} is derived under a binary classification setup and may not be inherently suited for multi-class classification.
Balanced Softmax~\cite{Ren2020balms} currently remains a state-of-the-art rebalancing method. Nevertheless, models trained with Balanced Softmax still exhibit imbalanced testing performance, where head classes maintain higher testing recall than tail classes. This discrepancy becomes more pronounced as the training data imbalance increases, contradicting the theoretical expectations of the method and stemming from specific assumptions made during its derivation. We provide a detailed discussion of these limitations in \Cref{ssec:bs_limits}. The logit-adjustment method introduced by~\cite{menon2021longtail} shares a similar shortcoming. Beyond per-class reweighting, \cite{cui2023reslt} proposed \gls{reslt}, which assigns higher weights to tail classes in the loss function and incorporates the post-hoc $\tau$-norm correction~\cite{Kang2020Decoupling}. While \gls{reslt} achieves competitive results, it requires a significant number of additional trainable parameters and structural modifications to adapt to datasets with different imbalance levels.}

\paragraph{Post-hoc correction methods}
\rev{The $\tau$-norm method, proposed by~\cite{Kang2020Decoupling}, is based on the observation that models trained on imbalanced data exhibit larger norms for linear classifier weights corresponding to head classes than those for tail classes. To address this, $\tau$-norm normalises the classifier weights using a hyperparameter $\tau$. A key limitation of this approach is that these normalised weights are often far from the optimal configuration for the learned features; consequently, the performance gains remain limited. Another prominent post-hoc correction strategy is the logit-adjustment method introduced by~\cite{menon2021longtail}. This approach shifts the output of the linear classifier by an amount proportional to the class-wise sample counts. Similar to $\tau$-norm, adjusting logits after training forces a shift in the decision boundary to mitigate the preference issue. However, the model parameters are not optimised in conjunction with this adjusted boundary, therefore, often remain suboptimal, which limits the overall effectiveness of the method.}


\section{Impact of imbalanced training data on model training}\label{sec:ib_train_data_impact}
\rev{In a standard deep learning model for object classification, each sample $n$ is compressed using a backbone, e.g., ResNet~\cite{he2016deep} as a vector $\mat{h}_{n}$, which is commonly referred to as a feature vector or representation in the literature~\cite{bengio2013representation}}. Then the feature vector is processed by a linear layer, which is also referred to as the linear classifier in the literature~\cite{Kang2020Decoupling}, via
\begin{equation}
   \label{eq:linear_layer}
   z_{nc} = \sum_{i} w_{ci}h_{ni} + b_{c} \quad \forall n \in \set{N}, c\in \set{C},
\end{equation}
with $h_{ni} \in \mat{h}_{n}$, linear classifier weight $w_{ci}$ and bias term $b_{c}$ for each class $c$. ${\set{C}}$ is the set of all the classes.
${\set{N}}$ is the set of all the training samples.
Index $i$ is for the dimension of the feature vector. In the literature, $z_{nc}$ is commonly referred to as logit, and the classification decision is made using
\begin{equation}
   \label{eq:classification}
   \argmax_{c}z_{nc}.
\end{equation}
For training the model, a softmax-based cross-entropy loss, $l^{ce}_{n}$, is calculated via
\begin{equation}
   \label{eq:softmax}
   p_{nc} = \frac{e^{z_{nc}}}{\sum_{d\in \set{C}} e^{z_{nd}}},
\end{equation}
\begin{equation}
   \label{eq:ce_n}
   l^{\note{ce}}_{n} = -\sum_{c\in \set{C}} y_{nc}\log p_{nc},
\end{equation}
which maximises the maximum likelihood across all the training samples.
\rev{$y_{nc} \in \{0,1\}$ denotes the class indicator, where $y_{nc} = 1$ if sample $n$ belongs to class $c$, and $y_{nc} = 0$ otherwise.}

During the back-propagation process, the linear classifier weights are updated via
\begin{gather}
   w_{ci}
   \leftarrow
   w_{ci} +
   \alpha \frac{\Delta^{\note{grad}}_{ci}}{\card{\set{N}}}\label{eq:w_update},
\end{gather}
where $\card{\set{N}}$ is the cardinality of set $\set{N}$, $\alpha$ is the learning rate, $\Delta^{\note{grad}}_{ci}$ is calculated via
\begin{gather}
   \Delta^{\note{grad}}_{ci}
   = \sum_{n\in \set{N}_{c}}(1-p_{nc}){h}_{ni} - \sum_{d\in \set{C}\backslash \{c\}}\sum_{n\in \set{N}_{d}} p_{nc}{h}_{ni} \label{eq:delta_grad}
\end{gather}
\rev{and $-\frac{\Delta^{\note{grad}}_{ci}}{\card{\set{N}}}$ is the gradient of $w_{ci}$. $\set{N}_{c}$ and $\set{N}_{d}$ denote the sets of training samples belonging to classes $c$ and $d$, respectively.}
\cref{eq:linear_layer,eq:classification,eq:softmax,eq:ce_n,eq:w_update,eq:delta_grad} are standard equations~\cite{goodfellow2016deep}.
In practice, deep learning models are trained using mini-batches~\cite{goodfellow2016deep}. For simplicity, we omit mini-batches in the equations.
The gradient $-\Delta^{\note{grad}}_{ci}$ is closely related to the softmax, which has now become a standard approach for dealing with multi-class classification tasks~\cite{goodfellow2016deep}.

\paragraph{Imbalanced gradients}
We refer to the left side component of the ``$-$'' in \cref{eq:delta_grad} as {\it reward} and the right side as {\it penalty}.
The rewards and penalties are crucial for training the model to distinguish between samples from different classes. However, they are also the primary cause of the preference issue when the applied dataset is imbalanced, as explained below.
\begin{thm}\label{thm:ib_grad}
    \rev{Let $\card{\set{C}}$ be the cardinality of set $\set{C}$ and $\card{\set{N}_{c}}$ be the cardinality of set $\set{N}_{c}$.
    Given a model that is in its randomly initialised state, it is valid to assume
    \begin{equation}
       p_{nc} = \frac{1}{\card{\set{C}}}\quad \forall n,c
    \end{equation}
    and
    \begin{equation}
       h_{ni} = \upsilon\quad\forall n, i
    \end{equation}
    with $\upsilon$ being a constant.
    Then we have
    \begin{gather}
       \Delta^{\note{grad}}_{ci}
       \propto  \card{\set{C}}\card{\set{N}_{c}}   - \card{\set{N}}.
    \end{gather}}
\end{thm}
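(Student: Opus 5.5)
The plan is to substitute the two initialisation assumptions of \cref{thm:ib_grad} directly into \cref{eq:delta_grad} and simplify. Both sums then consist of identical terms, so the whole task reduces to counting samples.

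First, the reward. Every term in $\sum_{n\in\set{N}_{c}}(1-p_{nc})h_{ni}$ equals $(1-\frac{1}{\card{\set{C}}})\upsilon$, so the reward is $\card{\set{N}_{c}}\,\frac{\card{\set{C}}-1}{\card{\set{C}}}\,\upsilon$.

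Second, the penalty. Every term in the double sum equals $\frac{1}{\card{\set{C}}}\upsilon$. The outer sum over $d\in\set{C}\backslash\{c\}$ together with the inner sum over $\set{N}_{d}$ ranges over all samples not in class $c$. Since the classes partition the training set, $\set{N}=\bigcup_{d}\set{N}_{d}$ is a disjoint union, so the number of such samples is $\card{\set{N}}-\card{\set{N}_{c}}$. The penalty is therefore $\frac{\card{\set{N}}-\card{\set{N}_{c}}}{\card{\set{C}}}\upsilon$.

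Third, subtract the penalty from the reward:
\begin{equation*}
\Delta^{\note{grad}}_{ci}=\frac{\upsilon}{\card{\set{C}}}\Bigl[(\card{\set{C}}-1)\card{\set{N}_{c}}-\card{\set{N}}+\card{\set{N}_{c}}\Bigr]=\frac{\upsilon}{\card{\set{C}}}\bigl(\card{\set{C}}\card{\set{N}_{c}}-\card{\set{N}}\bigr).
\end{equation*}
The prefactor $\upsilon/\card{\set{C}}$ is the same for every class $c$ and every dimension $i$, which gives the claimed proportionality.

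There is no real obstacle here. The only points needing care are the partition identity $\sum_{d\neq c}\card{\set{N}_{d}}=\card{\set{N}}-\card{\set{N}_{c}}$ and stating what "$\propto$" means: the constant is $\upsilon/\card{\set{C}}$, independent of $c$. If $\upsilon>0$, the sign of $\Delta^{\note{grad}}_{ci}$ is that of $\card{\set{C}}\card{\set{N}_{c}}-\card{\set{N}}$. So classes with more samples than average ($\card{\set{N}_{c}}>\card{\set{N}}/\card{\set{C}}$) receive net reward, and tail classes receive net penalty.
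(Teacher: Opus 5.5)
Your proposal is correct and matches the paper's proof: substitute $p_{nc}=1/\card{\set{C}}$ and $h_{ni}=\upsilon$ into \cref{eq:delta_grad}, use $\sum_{d\neq c}\card{\set{N}_{d}}=\card{\set{N}}-\card{\set{N}_{c}}$ to evaluate the reward and penalty, and simplify to $\frac{\upsilon}{\card{\set{C}}}\bigl(\card{\set{C}}\card{\set{N}_{c}}-\card{\set{N}}\bigr)$. Stating the class-independent constant $\upsilon/\card{\set{C}}$ explicitly makes the meaning of $\propto$ clearer than the paper's chain of proportionalities.
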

The proof for \Cref{thm:ib_grad} is provided in \Cref{app:ib_grad}.
This suggests that if the training dataset is imbalanced, the head classes have higher $\Delta^{\note{grad}}_{ci}$ compared to tail classes, considering the model is at its initial state and all the parameters are not updated.
\rev{As training progresses, the class-imbalanced $\Delta^{\mathrm{grad}}_{ci}$ converges to zero, while the model increasingly exhibits the preference issue.
}



\paragraph{The preference issue}
\rev{A model trained on imbalanced data exhibits an inherent preference issue, where tail classes suffer from significantly higher training error and a larger generalisation gap than head classes. High training error indicates severe underfitting (low training recall), while the generalisation gap represents the relative percentage drop from training to testing recall. This phenomenon is illustrated in \Cref{fig:preference_issue} using a ResNet50~\cite{he2016deep} with a standard softmax classifier on \mrev{ImageNet-LT} (see \Cref{sec:setup} for detailed configuration). As shown, head classes maintain high recall across both training and testing sets, whereas tail class recall on the test set drops to nearly half of its training value—a far more severe relative degradation than observed in head classes.}
\begin{figure}[!tb]
    \centering
    \includegraphics[width=\textwidth]{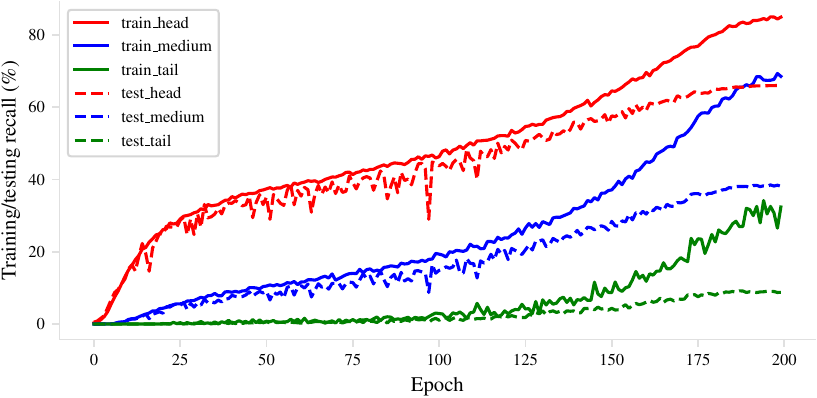}
    \caption{
    \mrev{The preference issue observed in the model trained on ImageNet-LT: significantly higher training error and larger generalisation gaps for tail classes. Solid lines represent average recall across class groups on training data. Dashed lines represent testing recall.}}
    \label{fig:preference_issue}
\end{figure}

\rev{Existing literature often attributes the poor performance of deep learning models for long-tailed recognition to the discrepancy between imbalanced training and an assumed balanced testing distribution~\cite{Ren2020balms}. While imbalanced training data is the root cause of performance degradation, we find the issue to be fundamentally independent of the testing data distribution, which in many real-world scenarios, such as the LVIS dataset~\cite{gupta2019lvis}, remains unknown and is itself often imbalanced.
To the best of our knowledge, this is the first work to reveal the preference issue as an inherent model characteristic caused by training on imbalanced data. This preference issue manifests as a structural imbalance in the model’s internal behaviour and persists regardless of the testing distribution. We believe that the preference issue provides a more fundamental insight into the nature of the long-tailed recognition problem.}

\rev{\mrev{In the following, we introduce the {\it model imbalance level}, a measure for quantifying the preference issue.}
Let $R^{\note{tr}}_{g}$ and $R^{\note{te}}_{g}$ denote the average recall of class group $g \in \{h, m, t\}$ (head, medium, and tail) on the training and testing sets, respectively. The relative generalisation gap for group $g$ is defined as:
\begin{equation}
    G_{g} = \frac{100 \times (R^{\note{tr}}_{g} - R^{\note{te}}_{g})}{R^{\note{tr}}_{g}}.
\end{equation}
The group-wise preference score, capturing both underfitting and generalisation gap, is formulated as:
\begin{equation}
    P_{g} = (100 - R^{\note{tr}}_{g}) + G_{g}.
\end{equation}
A lower $P_{g}$ indicates better group-specific performance. Finally, we quantify the model imbalance level ($I$) as the range of preference scores across groups:
\begin{equation}
    I = \max(P_{h}, P_{m}, P_{t}) - \min(P_{h}, P_{m}, P_{t}).
\end{equation}
A lower $I$ indicates a more balanced model with a mitigated preference issue.
\paragraph{Remark}
By aggregating these metrics over class groups ($h, m, t$) rather than individual classes, we ensure numerical stability and avoid undefined values arising from zero training recall in any single class, particularly among tail classes.}

\section{\texorpdfstring{\acrlong{cbs}}{CBS}}\label{sec:gbs}
This section elaborates on the \gls{cbs}, which is developed upon \cref{thm:two_p}, introduced as follows.

\rev{
\paragraph{Class probability distribution shift theorem}
\begin{thm}
   \label{thm:two_p}
   Let $\phi_{nc}$ denote the desired conditional probability of sample $n$ and class $c$ for a model (Model A), defined in Bayesian form as:
   \begin{equation}
       \phi_{nc} = p(y=c|n) = \frac{p(n|y=c)p(y=c)}{p(n)},
   \end{equation}
   where $p(y=c)$ is the desired class probability distribution, $p(n)$ is the marginal probability of sample $n$ and $p(n|y=c)$ is class-conditional density.
   Let $\hat{\phi}_{nc}$ denote the conditional probability of another model (Model B) trained on the same data, with the form
   \begin{equation}
       \hat{\phi}_{nc} = \hat{p}(y=c|n) = \frac{p(n|y=c)\hat{p}(y=c)}{p(n)},
   \end{equation}
   where $\hat{p}(y=c)$ is the class probability distribution associated with Model B.
   Assume that Model A and Model B share the same architectures and parameters, they produce identical $z_{nc}$ for each sample $n$ and class $c$.
   If $\phi_{nc}$ is expressed by the standard softmax function and Model A classifies samples via \cref{eq:classification}, then $\hat{\phi}_{nc}$ can be expressed as
   \begin{equation}
      \label{eq:general_balanced_softmax}
      \hat{\phi}_{nc} = \frac{\frac{\hat{p}(y=c)}{p(y=c)}e^{z_{nc}}}{\sum_{d\in \set{C}} \frac{\hat{p}(y=d)}{p(y=d)}e^{z_{nd}}},
   \end{equation}
   and model B classifies samples via
   \begin{equation}
       \argmax_{c} \left(z_{nc} + \log \frac{\hat{p}(y=c)}{p(y=c)}\right).
   \end{equation}
\end{thm}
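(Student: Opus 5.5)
The plan is to link the two Bayesian forms through their shared class-conditional density $p(n|y=c)$, and then substitute the softmax expression for Model A.

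\textbf{Step 1: relate $\hat{\phi}_{nc}$ to $\phi_{nc}$.} From the Bayesian form of Model A we can write
\begin{equation*}
p(n|y=c)=\frac{\phi_{nc}\,p(n)}{p(y=c)}.
\end{equation*}
Substituting this into the expression for $\hat{\phi}_{nc}$ gives
\begin{equation*}
\hat{\phi}_{nc}=\frac{\hat{p}(y=c)}{p(y=c)}\,\phi_{nc}.
\end{equation*}
The marginal $p(n)$ in the statement should not be taken literally as shared by both models. Model B's posteriors must sum to one over $\set{C}$, so its normaliser is really $\sum_{d} p(n|y=d)\hat{p}(y=d)$. I therefore treat the relation above up to a sample-dependent constant and fix that constant by requiring $\sum_{c}\hat{\phi}_{nc}=1$. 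This is the step I expect to be the main obstacle. The cleanest approach is to argue that the statement's $p(n)$ in Model B's formula is simply the appropriate normaliser, so renormalising is exactly what the theorem intends. Under that reading,
\begin{equation*}
\hat{\phi}_{nc}=\frac{\frac{\hat{p}(y=c)}{p(y=c)}\phi_{nc}}{\sum_{d\in\set{C}}\frac{\hat{p}(y=d)}{p(y=d)}\phi_{nd}}.
\end{equation*}

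\textbf{Step 2: substitute the softmax.} Because Model A uses the standard softmax with the same logits $z_{nc}$ as Model B, we have $\phi_{nc}=e^{z_{nc}}/\sum_{d}e^{z_{nd}}$. Plugging this in, the common denominator $\sum_{d}e^{z_{nd}}$ cancels between numerator and denominator, which yields \cref{eq:general_balanced_softmax}.

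\textbf{Step 3: derive Model B's decision rule.} Model B predicts $\argmax_c\hat{\phi}_{nc}$. The denominator of \cref{eq:general_balanced_softmax} does not depend on $c$, and $\log$ is strictly increasing. Hence maximising $\hat{\phi}_{nc}$ over $c$ is the same as maximising
\begin{equation*}
\log\!\left(\frac{\hat{p}(y=c)}{p(y=c)}e^{z_{nc}}\right)=z_{nc}+\log\frac{\hat{p}(y=c)}{p(y=c)},
\end{equation*}
which is the stated rule.

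For consistency, note that Model A's own rule \cref{eq:classification} follows from the same argument as the special case $\hat{p}=p$. No earlier results of the paper are needed beyond the definitions in \cref{eq:softmax,eq:classification}.
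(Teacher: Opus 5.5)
Your proof is correct and is essentially the paper's argument. The paper also combines three ingredients: the Bayes ratio $\hat{\phi}_{nc}/\phi_{nc}=\hat{p}(y=c)/p(y=c)$, the normalisation $\sum_{c}\hat{\phi}_{nc}=1$, and the softmax form of $\phi_{nc}$, phrased through the canonical link $z_{nc}=\log(\phi_{nc}/\phi_{nk})$ by shifting each logit by $\log(\hat{\phi}_{nc}/\phi_{nc})$. The step you flag as the main obstacle needs no reinterpretation of $p(n)$: with the shared marginal, normalisation forces $\sum_{d}\frac{\hat{p}(y=d)}{p(y=d)}\phi_{nd}=1$, i.e.\ $\sum_{c}e^{z_{nc}}=\sum_{c}\frac{\hat{p}(y=c)}{p(y=c)}e^{z_{nc}}$, which the paper records as a constraint linking $p$ and $\hat{p}$; so your renormalising denominator is exactly $1$ under the literal reading, your reading is the slightly more general one, and your Step~3 supplies the $\argmax$ rule that the paper's proof leaves implicit.
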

The formal proof for \Cref{thm:two_p} is provided in \Cref{app:proof_thm_two_p}.
This theorem establishes that Model A can be analytically recovered from Model B, after Model B is trained by calculating its loss based on \cref{eq:general_balanced_softmax}.
By strategically defining the two class probability distributions ${p}(y=c)$ and $\hat{p}(y=c)$ for \Cref{eq:general_balanced_softmax}, it is theoretically possible to ensure Model A to be a balanced model, i.e., a similar level of training error and generalisation gap across classes.}

\paragraph{Class probability distributions}
A critical step toward obtaining a balanced model is to define $p(y=c)$
and $\hat{p}(y=c)$. Ideally, if models A and B only produced binary
probabilities (0 or 1) for a sample $n$ and class $c$ on an imbalanced
training data, then both $p(y=c)$ and $\hat{p}(y=c)$ would equal
the class frequency $\frac{\card{\set{N}_{c}}}{\card{\set{N}}}$. However, this is unlikely in practice, as these
probability distributions are heavily dependent on the specific model
architecture and training state.
\rev{According to the law of total probability, we have
\begin{equation}\label{eq:p_y=c_total}
    {p}(y=c) = \sum_{n\in \set{N}} {p}(y=c|n)p(n),
\end{equation}
and $\hat{p}(y=c)$ can be calculated via
\begin{equation}\label{eq:hat_p_y=c_total}
    \hat{p}(y=c) = \sum_{n\in \set{N}} \hat{p}(y=c|n)p(n).
\end{equation}
}

\rev{
\begin{thm}\label{thm:p_y=c_pattern}
    Let $p(y=c)$ be calculated via an ideally balanced model on an imbalanced dataset. Then, $p(y=c) < \frac{\card{\set{N}_{c}}}{\card{\set{N}}}$ for the classes with $\card{\set{N}_{c}}\card{\set{C}} > \card{\set{N}}$, $p(y=c) \ge \frac{\card{\set{N}_{c}}}{\card{\set{N}}}$ otherwise.
\end{thm}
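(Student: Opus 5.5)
The plan is to make ``ideally balanced model'' precise and then compute $p(y=c)$ directly from \cref{eq:p_y=c_total}. Take $p(n)=1/\card{\set{N}}$ for every training sample, i.e.\ the empirical distribution. This gives
\begin{equation*}
p(y=c)=\frac{1}{\card{\set{N}}}\sum_{n\in\set{N}}p(y=c|n)
=\frac{1}{\card{\set{N}}}\Bigl(\sum_{n\in\set{N}_c}p(y=c|n)+\sum_{d\ne c}\sum_{n\in\set{N}_d}p(y=c|n)\Bigr).
\end{equation*}
By the definition preceding the theorem, an ideally balanced model has the same training behaviour across classes. I would formalise this as follows: the average probability assigned to the correct class is a common value $a\in(0,1]$ for every class, and the mass $1-a$ left over on each sample is spread uniformly over the other $\card{\set{C}}-1$ classes. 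The class-$c$ term is then $a\card{\set{N}_c}$. Each sample of another class contributes $(1-a)/(\card{\set{C}}-1)$ on average.

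Writing $b=(1-a)/(\card{\set{C}}-1)$, the total is
\begin{equation*}
p(y=c)=\frac{a\card{\set{N}_c}+b(\card{\set{N}}-\card{\set{N}_c})}{\card{\set{N}}}.
\end{equation*}
Subtracting $\card{\set{N}_c}/\card{\set{N}}$ gives
\begin{equation*}
p(y=c)-\frac{\card{\set{N}_c}}{\card{\set{N}}}=\frac{b\card{\set{N}}-(1-a+b)\card{\set{N}_c}}{\card{\set{N}}}.
\end{equation*}
Since $1-a=b(\card{\set{C}}-1)$, we have $1-a+b=b\card{\set{C}}$. The difference therefore equals $\frac{b}{\card{\set{N}}}\bigl(\card{\set{N}}-\card{\set{C}}\card{\set{N}_c}\bigr)$. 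This has the same structure as the expression $\card{\set{C}}\card{\set{N}_c}-\card{\set{N}}$ in \Cref{thm:ib_grad}, with the sign reversed.

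Reading off the sign finishes the argument. If $b>0$, then $p(y=c)<\card{\set{N}_c}/\card{\set{N}}$ exactly when $\card{\set{N}_c}\card{\set{C}}>\card{\set{N}}$, and $p(y=c)\ge\card{\set{N}_c}/\card{\set{N}}$ otherwise. The degenerate case $a=1$ gives $b=0$ and equality for every class, which is consistent with the ``$\ge$'' branch but not with the strict ``$<$'' branch. I would therefore assume the realistic situation $a<1$, meaning a nonzero training error, which the paper itself argues is unavoidable. I would also note that in the binary-probability case the paper mentions, equality is recovered.

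The main obstacle is justifying the modelling assumption: that ``ideally balanced'' means equal per-class average correct-class probability, with the misclassified mass spread uniformly over the other classes. Uniform spreading is stronger than strictly needed. What is really required is that the average mass flowing into class $c$ from samples of other classes does not scale with $\card{\set{N}_c}$. I would first try to present the weaker symmetric assumption: the expected probability given to any wrong class is the same constant $b>0$. I would argue this is what ``balanced across classes'' means, since no class is preferred as a wrong answer. The statement then follows from the linear computation above without further work.
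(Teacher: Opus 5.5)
Your proposal is correct and follows the same route as the paper. The paper likewise assumes that every training sample of class $c$ receives correct-class probability $1-\epsilon$, with the residual mass $\epsilon/(\card{\set{C}}-1)$ going uniformly to each other class; it takes $p(n)=1/\card{\set{N}}$ and reads off the sign of $p(y=c)-\card{\set{N}_{c}}/\card{\set{N}}$. Your version is slightly cleaner: by keeping $\card{\set{C}}-1$ exact you obtain $\frac{b}{\card{\set{N}}}(\card{\set{N}}-\card{\set{C}}\card{\set{N}_{c}})$ exactly, whereas the paper replaces $\card{\set{C}}-1$ by $\card{\set{C}}$ and later $1+\card{\set{C}}$ by $\card{\set{C}}$, two approximations that happen to cancel.
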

The proof for \Cref{thm:p_y=c_pattern} is provided in \Cref{app:p_y=c_proof}.
This theorem characterises the inherent shift in class probability distribution when a balanced decision boundary is projected onto an imbalanced dataset.
Our empirical results, presented in \Cref{fig:p_b_c} and \Cref{tab:p_b_c}, provide strong support for this theoretical finding.
In \Cref{fig:p_b_c}, $p(y=c)$ is computed on the training data of \mrev{ImageNet-LT} using the model trained on ImageNet-1K. This training setup ensures that the model maintains a balanced decision boundary, i.e., different classes have a similar level of training error and generalisation gap.
\begin{figure}[!tb]
   \centering
   \includegraphics[width=\textwidth]{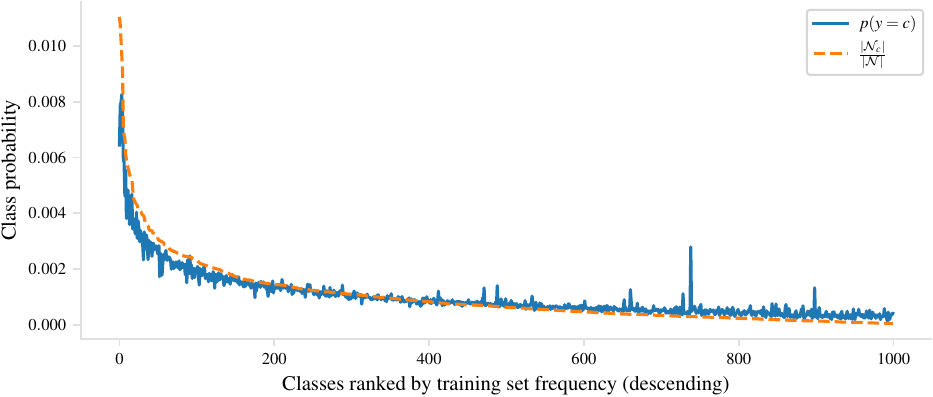}
   \caption{\mrev{Class probability distribution generated by a balanced model on the ImageNet-LT training data.} Consistent with Theorem~\ref{thm:p_y=c_pattern}, the class probability $p(y=c)$ predicted by a balanced model is lower than the class frequency $\frac{\card{\set{N}_{c}}}{\card{\set{N}}}$ for head classes and higher for tail classes.}
   \label{fig:p_b_c}
\end{figure}
In \Cref{tab:p_b_c}, each value is the average relative difference between ${p}(y=c)$ and $\frac{\card{\set{N}_{c}}}{\card{\set{N}}}$ across head, medium and tail classes, calculated on the training data of various datasets.
The relative difference is calculated via
\begin{equation}
    {\Delta p} = \frac{{p}(y=c)-\frac{\card{\set{N}_{c}}}{\card{\set{N}}}}{\frac{\card{\set{N}_{c}}}{\card{\set{N}}}}\times100\%.
\end{equation}
The models applied are trained on ImageNet-1K or full CIFAR10 or CIFAR100, which ensures the models are balanced across classes, i.e., different classes have a similar level of training error and generalisation gap.
}

\begin{table}[!t]\centering\small
\caption{Relative difference between class probability and class frequency $\frac{\card{\set{N}_{c}}}{\card{\set{N}}}$ for a
balanced model. Consistent with Theorem~\ref{thm:p_y=c_pattern},
$p(y=c)$ is lower than $\frac{\card{\set{N}_{c}}}{\card{\set{N}}}$ for head classes
and higher for tail classes. Values represent the average
relative difference $\Delta p$.}\label{tab:p_b_c}
\begin{tabular}{lrrrr}\toprule
Dataset &Head &Medium &Tail \\\midrule
C10-LT10 &-7.86 &3.85 &34.37 \\
C10-LT50 &-14.84 &22.13 &241.43 \\
C10-LT100 &-17.21 &39.93 &501.08 \\
C100-LT10 &-10.13 &9.06 &46.77 \\
C100-LT50 &-12.12 &31.63 &171.11 \\
C100-LT100 &-12.09 &48.48 &300.33 \\
ImageNet-LT &-9.79 &47.48 &280.27 \\
\bottomrule
\end{tabular}
\end{table}

\rev{We also illustrate the class probability distribution $\hat{p}(y=c)$ in \Cref{tab:p_hat_dist}. It indicates that given an imbalanced model, we have $\hat{p}(y=c) > \frac{\card{\set{N}_{c}}}{\card{\set{N}}}$ for head classes, and $\hat{p}(y=c) < \frac{\card{\set{N}_{c}}}{\card{\set{N}}}$ for tail classes. This is consistent with our analysis in \Cref{app:hat_p_y=c_proof}.
Each value in \Cref{tab:p_hat_dist} is the average relative difference between $\hat{p}(y=c)$ and $\frac{\card{\set{N}_{c}}}{\card{\set{N}}}$, across head, medium and tail classes, calculated on the training data of various datasets. The model applied is trained via the standard softmax.
The relative difference is calculated via
\begin{equation}
    \Delta\hat{p} = \frac{\hat{p}(y=c)-\frac{\card{\set{N}_{c}}}{\card{\set{N}}}}{\frac{\card{\set{N}_{c}}}{\card{\set{N}}}}\times100\%.
\end{equation}
The detailed experimental setup for \Cref{fig:p_b_c} and \Cref{tab:p_b_c,tab:p_hat_dist} is introduced in \Cref{sec:setup}.
}

\begin{table}[!t]
   \centering\small
   \caption{Relative difference between class probability and class frequency $\frac{\card{\set{N}_{c}}}{\card{\set{N}}}$ for
   an imbalanced model. For models trained with standard softmax,
   $\hat{p}(y=c)$ is higher than $\frac{\card{\set{N}_{c}}}{\card{\set{N}}}$ for head
   classes and lower for tail classes. Values represent
   the average relative difference $\Delta \hat{p}$.
   }\label{tab:p_hat_dist}
   \begin{tabular}{lrrrr}
      \toprule
      Dataset & Head &Medium &Tail    \\\midrule
      C10-LT10   & 1.13 & -0.31  & -5.28 \\
      C10-LT50   & 1.11 & -1.70 & -16.48 \\
      C10-LT100  & 0.84 & -2.50 & -20.10 \\
      C100-LT10     & 6.39 & -5.30   & -29.40  \\
      C100-LT50     & 4.81 & -12.48  & -58.94 \\
      C100-LT100        & 3.84 & -17.65   & -71.61  \\
      ImageNet-LT & 0.49 & -1.51  & -12.93  \\
      \bottomrule
   \end{tabular}
\end{table}

\paragraph{Class probability distributions approximation}

Given that both probability distributions ${p}(y=c)$ and $\hat{p}(y=c)$ are closely related to $\frac{\card{\set{N}_{c}}}{\card{\set{N}}}$, we approximate them as
\begin{gather}
   {p}(y=c) = \frac{\card{\set{N}_{c}}}{\card{\set{N}}}\kappa_{c}/\sum_{c'\in \set{C}}\frac{\card{\set{N}_{c'}}}{\card{\set{N}}}\kappa_{c'}\\
   \hat{p}(y=c) = \frac{\card{\set{N}_{c}}}{\card{\set{N}}}\hat{\kappa}_{c}/ \sum_{c'\in \set{C}}\frac{\card{\set{N}_{c'}}}{\card{\set{N}}}\hat{\kappa}_{c'}.
\end{gather}
\rev{Consequently, \cref{eq:general_balanced_softmax} equals
\begin{gather}
   \hat{\phi}_{nc}
   = \frac{\frac{\hat{\kappa}_{c}/ \sum_{c'}\frac{\card{\set{N}_{c'}}}{\card{\set{N}}}\hat{\kappa}_{c'}}{\kappa_{c}/\sum_{c'}\frac{\card{\set{N}_{c'}}}{\card{\set{N}}}\kappa_{c'}}e^{z_{nc}}}{\sum_{d\in \set{C}} \frac{\hat{\kappa}_{d}/ \sum_{d'}\frac{\card{\set{N}_{d'}}}{\card{\set{N}}}\hat{\kappa}_{d'}}{\kappa_{d}/\sum_{d'}\frac{\card{\set{N}_{d'}}}{\card{\set{N}}}\kappa_{d'}}  e^{z_{nd}}}
\end{gather}
The distribution followed by $\kappa_{c}$, according to \Cref{tab:p_b_c}, should ensure $\kappa_{c}$ is lower for head classes and higher for tail classes.
In contrast, the distribution followed by $\hat{\kappa}_{c}$, according to \Cref{tab:p_hat_dist},  should ensure $\hat{\kappa}_{c}$ is larger for head classes and lower for tail classes.
To model these scaling behaviours, we heuristically adopt power-law functions:
\begin{gather}
   \kappa_{c} = \card{\set{N}_{c}}^{\beta^{\note{b}}} \label{eq:kappa}\\
   \hat{\kappa}_{c} = \card{\set{N}_{c}}^{\beta^{\note{ib}}}\label{eq:hat_kappa},
\end{gather}
with $\beta^{\note{b}}\le 0$ and $\beta^{\note{ib}}\ge 0$.
We adopt the power-law form because it effectively characterises the non-linear relationship between class frequency and prediction probability observed in our analysis. Furthermore, power-law functions are a well-established choice in the literature due to their flexibility in modulating the impact of weights, as demonstrated by methods such as Focal Loss. \cref{eq:kappa,eq:hat_kappa} ensure that the heuristic approximations of $p(y=c)$ and $\hat{p}(y=c)$ are both empirically grounded and mathematically flexible.
}
Substituting these into the expression for $\hat{\phi}_{nc}$, we have

\begin{gather}
   \hat{\phi}_{nc}
   = \frac{\frac{\card{\set{N}_{c}}^{\beta^{\note{ib}}}/ \sum_{c'}\frac{\card{\set{N}_{c'}}}{\card{\set{N}}}\card{\set{N}_{c'}}^{\beta^{\note{ib}}}}{\card{\set{N}_{c}}^{\beta^{\note{b}}}/\sum_{c'}\frac{\card{\set{N}_{c'}}}{\card{\set{N}}}\card{\set{N}_{c'}}^{\beta^{\note{b}}}}e^{z_{nc}}}{\sum_{d\in \set{C}} \frac{\card{\set{N}_{d}}^{\beta^{\note{ib}}}/ \sum_{d'}\frac{\card{\set{N}_{d'}}}{\card{\set{N}}}\card{\set{N}_{d'}}^{\beta^{\note{ib}}}}{\card{\set{N}_{d}}^{\beta^{\note{b}}}/\sum_{d'}\frac{\card{\set{N}_{d'}}}{\card{\set{N}}} \card{\set{N}_{d'}}^{\beta^{\note{b}}} } e^{z_{nd} }}
   \approx
   \frac{
       \card{\set{N}_{c}}^{\beta^{\note{ib}}-\beta^{\note{b}}} e^{z_{nc}}
   }{
       \sum_{d\in \set{C}} \card{\set{N}_{d}}^{\beta^{\note{ib}}-\beta^{\note{b}}} e^{z_{nd}}
   }
\end{gather}
Let hyperparameter $\beta$ be defined as $\beta = \beta^{\note{ib}} - \beta^{\note{b}}$ and $\beta \ge 0$,
we obtain
\begin{equation}\label{eq:cbs}
   \hat{\phi}_{nc} = \frac{\card{\set{N}_{c}}^{\beta} e^{z_{nc}}}{\sum_{d\in \set{C}} \card{\set{N}_{d}}^{\beta} e^{z_{nd}}}.
\end{equation}

\mrev{The \gls{cbs}, as formulated in \cref{eq:cbs}, is a computationally efficient logit adjustment that integrates seamlessly into existing deep learning pipelines with negligible overhead and no additional trainable parameters. Moreover, \gls{cbs} fundamentally generalises Balanced Softmax, which emerges as a rigid special case when $\beta=1$. While Balanced Softmax applies a fixed additive logit shift of $\log \card{\set{N}_{c}}$ based on class frequencies, \gls{cbs} introduces the power-law hyperparameter $\beta$ to expand this modification to $\beta \log \card{\set{N}_{c}}$. This parameterised flexibility allows for dynamic calibration of class-wise gradient signals during back-propagation. Consequently, \gls{cbs} maintains the computational efficiency of Balanced Softmax while proving more effective at mitigating the model preference issue, as demonstrated in \cref{sec:experiments}. Furthermore, although this paper focuses on imbalanced classification, \gls{cbs} can be readily extended to other downstream tasks, such as long-tailed instance segmentation and object detection~\cite{gupta2019lvis}.}

\paragraph{Interpretation of the \texorpdfstring{\gls{cbs}}{cbs} regarding gradients}
According to Theorem~\ref{thm:two_p}, the \gls{cbs} operates based on the relation between $p(y=c)$, $\hat{p}(y=c)$, $\phi_{nc}$ and $\hat{\phi}_{nc}$.
From the perspective of \cref{eq:delta_grad}, the additional term $\log\card{\set{N}_{c}}^{\beta}$ and $\log\card{\set{N}_{d}}^{\beta}$ in the \gls{cbs} increases $z_{nc}$ for head classes and reduces $z_{nc}$ for tail classes during model training.
This adjustment reduces the rewards and increases the penalties for head classes, while it increases the rewards and reduces the penalties for tail classes.
This leads to more balanced gradients during the back-propagation for model training, thus mitigating the development of the preference issue.

\section{Experiments}~\label{sec:experiments}
This section details the experimental evaluation of the proposed \gls{cbs}. We first describe the datasets employed (\Cref{sec:datasets}) and the experimental configuration (\Cref{sec:setup}). Subsequently, we provide a comparative analysis of \gls{cbs} against various existing methods (\Cref{sec:compare_bs}). Specifically, we evaluate performance in terms of the model preference issue (\Cref{sec:less_preference_issue}), testing recall (\Cref{sec:better_recall}), scalability (\Cref{sec:scalability}), and gradient behaviour during back-propagation (\Cref{sec:better_gradients}).

\subsection{Datasets}~\label{sec:datasets}

\rev{The datasets considered in this paper includes the long-tailed CIFAR10 and CIFAR100 benchmarks~\cite{cao2019learning} (abbreviated as C10-LT* and C100-LT* in the tables and figures), ImageNet-1K~\cite{deng2009imagenet}, ImageNet-LT~\cite{openlongtailrecognition2019}, Place-LT~\cite{openlongtailrecognition2019} and iNaturalist2018~\cite{van2018inaturalist}.}
Detailed characteristics of these datasets are summarised in \Cref{tab:dataset_summary}.

\begin{table*}[!t]
   \centering
   \caption{A summary of the details for the datasets applied.}\label{tab:dataset_summary}
   \resizebox{\textwidth}{!}{%
   \begin{tabular}{lrrrrrrrrrrrr}
      \toprule
      \multirow{2}{*}{Dataset} & \multirow{2}{*}{\makecell{Imbalance \\ level}}                & \multicolumn{4}{c}{\#Classes}            & & \multicolumn{2}{c}{\#Training samples} & & \multicolumn{2}{c}{\#Testing samples} \\\cmidrule{3-6}\cmidrule{8-9}\cmidrule{11-12}
      &          & Head    & Medium      & Tail    & Total    & & Total   & Per class      & & Total  & Per class      \\\midrule
      C10    & 1         & 3    & 3      & 4    & 10    & & 50000   & 5000  & & 10000  & 1000      \\
      C10-LT10    & 10         & 3    & 3      & 4    & 10    & & 20431   & 500-5000  & & 10000  & 1000      \\
      C10-LT50   & 50        & 3    & 3      & 4    & 10    & & 13996   & 100-5000    & & 10000  & 1000      \\
      C10-LT100      & 100          & 3   & 3     & 4   & 10   & & 12406   & 50-500       & & 10000  & 1000       \\
      C100   & 1         & 35   & 35     & 30   & 100   & & 50000   & 500    & & 10000  & 100       \\
      C100-LT10   & 10         & 35   & 35     & 30   & 100   & & 19573   & 50-500    & & 10000  & 100       \\
      C100-LT50  & 50        & 35   & 35     & 30   & 100   & & 12608   & 10-500     & & 10000  & 100       \\
      C100-LT100     & 100 & 35  & 35    & 30  & 100  & & 10847 & 5-500  & & 10000  & 100        \\
      ImageNet-1K     & $\approx$1        & 385  & 479    & 136  & 1000  & & 1281167  & 732-1300    & & 50000  & 50        \\
      ImageNet-LT        & 256        & 385  & 479    & 136   & 1000   & & 115846   & 5-1280    & & 50000  & 50       \\
      Place-LT & 996        & 131  & 163   & 71 & 365  & & 62500  & 5-4980    & & 36500  & 100         \\
      iNaturalist2018       & 500      & 842  & 4076    & 3224  & 8142  & & 437513 & 2-1000   & & 24426 & 3    \\
      LVIS & 50550 & 405 & 461 & 337 & 1203 && 1269748 &  1-50550 & & 244645 & 1-9156 \\
      \bottomrule
   \end{tabular}}
\end{table*}

The CIFAR10 or CIFAR100~\cite{krizhevsky2009learning}, each sample has a size of $32\times32$.
Long-tailed CIFAR datasets are generated based on CIFAR10 and CIFAR100 using the method introduced by~\cite{cao2019learning}.
We generate datasets in {\it exp} style~\cite{cao2019learning}.
Each generated dataset contains head classes (classes 0-2 for CIFAR10, 0-34 for CIFAR100), medium classes (classes 3-5 for CIFAR10 and 35-69 for CIFAR100) and tail classes (classes 6-9 for CIFAR10 and 70-99 for CIFAR100). The number of samples per class follows an exponential decay.
Each of the CIFAR10 and CIFAR100 has three variances, considering three imbalance levels, 10, 50, and 100, respectively.
For example, C10-LT10 is CIFAR10 with an imbalance level of 10.

ImageNet-1K and ImageNet-LT~\cite{openlongtailrecognition2019} have the same classes and testing data, and different training data.
The classes in the ImageNet-LT are divided into three sets regarding the training data~\cite{cui2023reslt}. Each head class has more than 100 samples. Each medium class has 20-100 samples. Each tail class has fewer than 20 samples.
The ImageNet-1K uses the same class set division as the ImageNet-LT.
The classes in Place-LT and iNaturalist2018 are also divided into head, medium and tail classes in the same way as ImageNet-LT.

For the LVIS-based image classification dataset, the training (testing) samples are obtained by cropping the objects from all the training (testing) samples in the LVIS-V1 regarding their boxes. The dataset has 1203 classes, an imbalance level of 50550 for the training set and 9156 for the testing set.

\subsection{Experimental setup}~\label{sec:setup}
We follow the configuration applied in~\cite{cui2023reslt}.
By default, each model is trained for 200 epochs using the \gls{sgd} as the optimiser, given that \gls{sgd} performs better for classification problems compared with the adaptive optimisation methods such as Adam~\cite{wilson2017marginal}. When running the BCL~\cite{zhu2022balanced} or ProCo~\cite{du2024probabilistic}, we run 90 epochs following~\cite{du2024probabilistic} due to the high computational cost.

For the long-tailed CIFAR datasets, we use a single NVIDIA A100 GPU and consider batch size 128, SGD optimiser with momentum 0.9 and weight decay 0.0005.
The initial learning rate is 0.1 and divided by 0.1 at epoch 160 and 180.
During the first five epochs, we use a linear warm-up.
During training, we preprocess each input image by performing a random cropping of 32 by 32 pixels from the image with 4 pixels padding at each side of the original image, followed by a random horizontal flip and normalisation.
During testing, we only normalise the images for the preprocessing.
The neural network applied for the long-tailed CIFAR datasets is ResNet32~\cite{he2016deep}.

For ImageNet-1K, ImageNet-LT, Place-LT, and iNaturalist2018, we always use four NVIDIA A100 GPUs and consider a total batch size of 256, SGD optimiser with momentum 0.9. For Place-LT we use a total batch size of 128 when running the BCL~\cite{zhu2022balanced} or ProCo~\cite{du2024probabilistic}. The initial learning rate follows the cosine learning rate schedule~\cite{loshchilov2017sgdr}, gradually decaying from 0.1 to 0.
The weight decay of SDG optimiser for iNaturalist2018, Place-LT, ImageNet-LT and ImageNet-1K are 0.0001, 0.0005, 0.0005 and 0.0001, respectively.
During training, the input data is preprocessed as $224\times224\times3$ images with random-crop-resize followed by random-horizontal-flip and normalisation.
For the Place-LT, ImageNet-LT and ImageNet-1K, the ColorJitter is also applied to the training data before the normalisation step.
During testing, the input data is resized to $256\times256\times3$, followed by a 224 by 224 central-crop and normalisation.
We apply ResNet50~\cite{he2016deep} for ImageNet-1K, ImageNet-LT and iNaturalist2018, and apply ResNet152~\cite{he2016deep} for Place-LT.

Our setup for the LVIS-based image classification dataset follows~\cite{he2017mask}. We run 28 epochs and use the ResNet50 model, which is initialised using the parameters pretrained on ImageNet-1K. The images are normalised before being fed into the model, and no data augmentation operation is applied during the training.

\rev{
The hyperparameter $\beta$ in \cref{eq:cbs} accounts for the complex interaction between imbalanced data and model architecture. Consequently, it serves as a calibration factor rather than an analytical derivation from dataset statistics. As established in \Cref{sec:gbs}, $\beta\ge0$. When $\beta = 0$, the \gls{cbs} is equivalent to the standard softmax and provides no rebalancing effect. Conversely, an excessively high $\beta$ causes the model to over-prioritise tail classes, which can reduce overall performance. The preferred value of $\beta$ for each dataset is determined via a grid search. To reduce the computational cost during the search for the value of $\beta$, we initialise the process within the range $[1.0, 1.3]$ using a step size of $0.1$ and subsequently explore values beyond this interval only if the performance trend suggests it is necessary.}
\rev{\Cref{tab:sensitivity} presents the training and testing recall of the \gls{cbs} under different values of $\beta$ on the Place-LT dataset. These results show that $\beta$ modulates the head-tail trade-off: increasing its value improves tail-class recall while reducing head-class accuracy. Up to a point, $\beta = 1.3$ in this case, increasing the parameter improves overall accuracy and model balance. Beyond this value, the overall accuracy decreases as the gains in tail classes do not offset the losses in head classes. This sensitivity analysis illustrates our strategy of selecting $\beta=1.3$ as the preferred value for Place-LT, as it achieves the highest overall accuracy and mitigates the preference issue.
}
\begin{table}[!t]\centering\small
\caption{\rev{Sensitivity analysis of the hyperparameter $\beta$ using the Place-LT dataset. Higher values in each column are highlighted with a darker intensity.}}\label{tab:sensitivity}
\begin{tabular}{lrrrrrrrrrr}\toprule
\multirow{2}{*}{$\beta$} &\multicolumn{4}{c}{Training recall} & &\multicolumn{4}{c}{Testing recall} \\\cmidrule{2-5}\cmidrule{7-10}
&Head &Medium &Tail &All & &Head &Medium &Tail &All \\\midrule
1.1 &\cellcolor[HTML]{888888}84.15 &\cellcolor[HTML]{b8b8b8}84.90 &\cellcolor[HTML]{f5f5f5}82.74 &\cellcolor[HTML]{b8b8b8}84.21 & &\cellcolor[HTML]{a8a8a8}35.46 &\cellcolor[HTML]{f5f5f5}28.86 &\cellcolor[HTML]{f5f5f5}20.39 &\cellcolor[HTML]{f5f5f5}29.58 \\
1.2 &\cellcolor[HTML]{a8a8a8}83.30 &\cellcolor[HTML]{a8a8a8}85.42 &\cellcolor[HTML]{dcdcdc}83.78 &\cellcolor[HTML]{888888}84.34 & &\cellcolor[HTML]{888888}35.46 &\cellcolor[HTML]{a8a8a8}29.69 &\cellcolor[HTML]{dcdcdc}23.83 &\cellcolor[HTML]{a8a8a8}30.62 \\
1.3 &\cellcolor[HTML]{b8b8b8}81.95 &\cellcolor[HTML]{888888}85.79 &\cellcolor[HTML]{c8c8c8}84.96 &\cellcolor[HTML]{a8a8a8}84.25 & &\cellcolor[HTML]{b8b8b8}34.85 &\cellcolor[HTML]{888888}29.90 &\cellcolor[HTML]{c8c8c8}24.65 &\cellcolor[HTML]{888888}30.65 \\
1.4 &\cellcolor[HTML]{c8c8c8}79.61 &\cellcolor[HTML]{dcdcdc}84.00 &\cellcolor[HTML]{888888}87.95 &\cellcolor[HTML]{c8c8c8}83.19 & &\cellcolor[HTML]{c8c8c8}33.05 &\cellcolor[HTML]{c8c8c8}29.58 &\cellcolor[HTML]{b8b8b8}26.06 &\cellcolor[HTML]{c8c8c8}30.14 \\
1.5 &\cellcolor[HTML]{dcdcdc}77.98 &\cellcolor[HTML]{c8c8c8}84.02 &\cellcolor[HTML]{b8b8b8}86.48 &\cellcolor[HTML]{dcdcdc}82.33 & &\cellcolor[HTML]{dcdcdc}31.99 &\cellcolor[HTML]{dcdcdc}29.33 &\cellcolor[HTML]{a8a8a8}27.75 &\cellcolor[HTML]{dcdcdc}29.98 \\
1.6 &\cellcolor[HTML]{f5f5f5}76.87 &\cellcolor[HTML]{f5f5f5}83.82 &\cellcolor[HTML]{a8a8a8}87.30 &\cellcolor[HTML]{f5f5f5}82.00 & &\cellcolor[HTML]{f5f5f5}31.17 &\cellcolor[HTML]{b8b8b8}29.68 &\cellcolor[HTML]{888888}29.35 &\cellcolor[HTML]{b8b8b8}30.15 \\
\bottomrule
\end{tabular}
\end{table}

\subsection{Comparing with the existing methods}\label{sec:compare_bs}
Since \gls{cbs} is a rebalancing framework, we evaluate it against representative state-of-the-art rebalancing methods, including Balanced Softmax~\cite{Ren2020balms}, Focal Loss~\cite{lin2017focal}, $\tau$-norm~\cite{Kang2020Decoupling}, Adjust logit~\cite{menon2021longtail}, \gls{reslt}~\cite{cui2023reslt}, \gls{crt}~\cite{Kang2020Decoupling}, and \gls{ldam}~\cite{cao2019learning}. We place particular emphasis on Balanced Softmax as it serves as a primary benchmark in recent literature, such as~\cite{zhu2022balanced} and~\cite{du2024probabilistic}. While some of these baseline methods are established, they remain the most pertinent points of comparison given our focus on fundamental learning challenges. In addition, we extend our evaluation to include recent non-softmax-based approaches, specifically CAL~\cite{yanneural} and DisA~\cite{gao2024distribution}. Finally, as recent progress in long-tailed learning has increasingly focused on advanced representation learning, particularly contrastive learning, we investigate the compatibility of \gls{cbs} with these approaches. To this end, we evaluate our method in conjunction with BCL~\cite{zhu2022balanced} and ProCo~\cite{du2024probabilistic}.

To ensure a fair comparison, all results reported in this section were obtained from our own experiments using a consistent experimental setup. The only exceptions are the results for CAL~\cite{yanneural} and DisA~\cite{gao2024distribution}, which were taken directly from their original publications as implementations for certain datasets were unavailable. However, the experimental protocols described in those studies are identical to the one employed in this work, ensuring the validity of the comparison.


\subsubsection{Effectiveness of the \texorpdfstring{\gls{cbs}}{CBS} in mitigating the preference issue}\label{sec:less_preference_issue}

\rev{This section evaluates the performance of \gls{cbs} in mitigating the preference issue, quantified by the model imbalance level $I$ as defined in \Cref{sec:ib_train_data_impact}.
To ensure a robust comparison, we conducted five independent runs for both \gls{cbs} and the state-of-the-art Balanced Softmax across each dataset. The results presented in \Cref{fig:model_imbalance_level} are the average values of $I$ obtained from the five runs. Due to computational constraints, only a single run was performed for the standard softmax baseline.
The results in \Cref{fig:model_imbalance_level} indicate that the standard softmax exhibits the highest imbalance level $I$ in all scenarios, particularly on large-scale datasets such as LVIS and ImageNet-LT, where the metric exceeds 100.
This confirms our theoretical analysis in \Cref{sec:ib_train_data_impact}, which suggested that the softmax cross-entropy loss inherently over-emphasises head-class gradients at the expense of the tail.
While Balanced Softmax~\cite{Ren2020balms} successfully reduces this imbalance compared to the baseline, our proposed \gls{cbs} consistently achieves further, more substantial improvements. Notably, the effectiveness of \gls{cbs} remains robust as we transition from smaller benchmarks to large-scale datasets such as iNaturalist2018 and LVIS. This suggests that the gradient-level rebalancing of \gls{cbs} is inherently resilient to increases in data volume and complexity. A more comprehensive analysis regarding the scalability is provided in \Cref{sec:scalability}.
}
\begin{figure}[!tp]
    \centering
    \includegraphics[width=\textwidth]{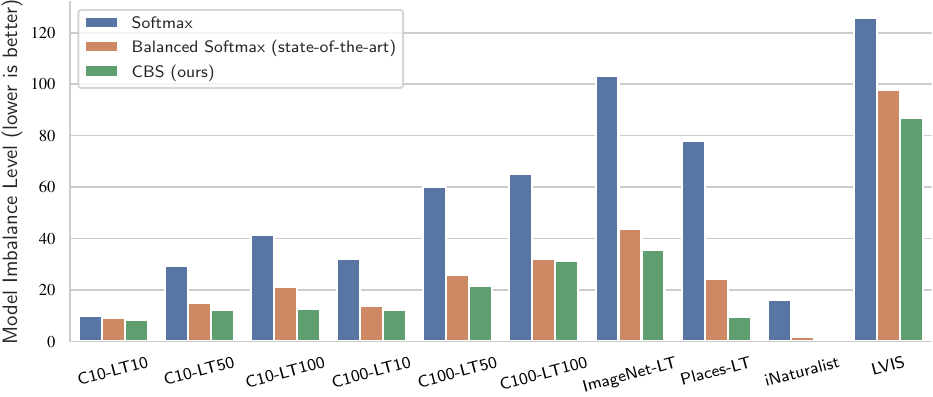}
    \caption{Quantitative comparison of the model imbalance level $I$ across multiple long-tailed benchmarks. Lower values of $I$ denote a more uniform distribution of class-wise performance, indicating a more balanced model. Our proposed \gls{cbs} achieves the lowest imbalance levels, demonstrating superior mitigation of the preference issue compared to both standard Softmax and the state-of-the-art Balanced Softmax.}\label{fig:model_imbalance_level}
\end{figure}

\subsubsection{The \texorpdfstring{\gls{cbs}}{CBS} achieves better testing accuracy}\label{sec:better_recall}
The previous section has demonstrated that our proposed \gls{cbs} consistently outperforms the state of the art regarding mitigating the preference issue.
\rev{This section further studies the performance of the \gls{cbs} regarding the average recall across \textit{all} classes, a metric that has been widely used in the literature~\cite{Kang2020Decoupling,cui2023reslt}.}
The results are reported in \Cref{tab:test_recall_cifars,tab:test_recall_imagenet_lt,tab:test_recall_place_lt}, where
we also report the average recall across \textit{head}, \textit{medium} and \textit{tail} classes to have a deep understanding of the methods' performance.
\rev{To provide a robust comparison without incurring prohibitively high computational costs, we report the $\mathrm{mean} \pm \mathrm{standard deviation}$ over five independent runs for both \gls{cbs} and Balanced Softmax in \Cref{tab:test_recall_cifars,tab:test_recall_imagenet_lt,tab:test_recall_place_lt}. The statistical performance of these two methods on ImageNet-LT and Place-LT is further visualised in \Cref{fig:bs_cbs_img_place_box}. Due to space constraints, the corresponding box plots for the six long-tailed CIFAR datasets are provided in \Cref{ssec:cifar_statistic}.}
\begin{figure}[!tp]
    \centering
    \includegraphics[width=\linewidth]{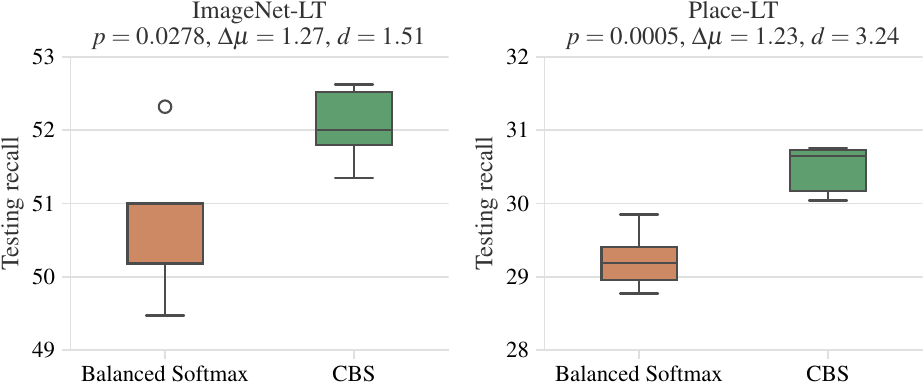}
    \caption{\mrev{Statistical comparison of the average testing recall across all classes for Balanced Softmax and \gls{cbs} on ImageNet-LT and Place-LT. The box plots represent the distribution of results over five independent runs. Statistical significance, $p$-value, is determined via Welch’s $t$-test, where $\Delta\mu$ denotes the mean difference and $d$ represents Cohen’s $d$ effect size. }}
    \label{fig:bs_cbs_img_place_box}
\end{figure}
\begin{table}[!t]\centering\small
\caption{Average testing recall across classes for \gls{cbs} and Balanced Softmax on long-tailed CIFAR datasets. Results report the mean $\pm$ standard deviation over five independent runs. Values in bold indicate that \gls{cbs} outperforms the state-of-the-art on that dataset.}\label{tab:test_recall_cifars}
\resizebox{\textwidth}{!}{
\begin{tabular}{lrrrrrr}\toprule
Method &Dataset &Head &Medium &Tail  &All \\\midrule
\multirow{6}{*}{\makecell[l]{Balanced Softmax}}
&C10-LT10   &93.35$\pm$0.42 & 84.20$\pm$0.60 & 89.02$\pm$0.47 & 88.87$\pm$0.32 \\
&C10-LT50   &92.68$\pm$0.75 & 78.23$\pm$0.42 & 78.22$\pm$0.97 & 82.56$\pm$0.47 \\
&C10-LT100  &92.40$\pm$0.26 & 75.77$\pm$1.20 & 71.43$\pm$1.38 & 79.02$\pm$0.59 \\
&C100-LT10  &66.80$\pm$0.76 & 59.83$\pm$0.89 & 53.39$\pm$0.70 & 60.34$\pm$0.35 \\
&C100-LT50  &63.25$\pm$0.92 & 48.59$\pm$1.18 & 37.03$\pm$1.28 & 50.25$\pm$0.61 \\
&C100-LT100 &61.31$\pm$0.98 & 44.29$\pm$0.80 & 28.74$\pm$0.52 & 45.58$\pm$0.34 \\\midrule
\multirow{6}{*}{\gls{cbs} (our)}
&C10-LT10   &92.37$\pm$0.37 & 84.07$\pm$0.49 & 90.87$\pm$0.31 & \textbf{89.28$\pm$0.22}\\
&C10-LT50   &90.67$\pm$0.27 & 78.45$\pm$0.60 & 83.01$\pm$0.98 & \textbf{83.94$\pm$0.27}\\
&C10-LT100  &87.45$\pm$1.08 & 74.79$\pm$1.51 & 81.90$\pm$0.84 & \textbf{81.43$\pm$0.24}\\
&C100-LT10  &66.50$\pm$1.12 & 59.38$\pm$1.02 & 54.36$\pm$1.24 & \textbf{60.37$\pm$0.61}\\
&C100-LT50  &62.18$\pm$0.22 & 48.33$\pm$1.37 & 40.09$\pm$1.57 & \textbf{50.71$\pm$0.90}\\
&C100-LT100 &61.57$\pm$0.55 & 44.39$\pm$0.63 & 29.65$\pm$1.52 & \textbf{45.98$\pm$0.52}\\
\bottomrule
\end{tabular}}
\end{table}
\begin{table}[!t]\centering\small
\caption{The testing recall obtained by using different methods on ImageNet-LT.
For Balanced Softmax and \gls{cbs}, results represent the mean $\pm$ standard deviation across five independent runs.
Bold values indicate the highest average recall across the compared methods.}\label{tab:test_recall_imagenet_lt}
\resizebox{\textwidth}{!}{
\begin{tabular}{lrrrrr}\toprule
\multirow{2}{*}{Method} &\multicolumn{4}{c}{Testing recall} \\\cmidrule{2-5}
&Head &Medium &Tail &All \\\midrule
\multicolumn{5}{c}{ResNet50 + a softmax-based classifier} \\ \midrule
Standard softmax~\cite{goodfellow2016deep} &66.03 &38.13 &8.76 &44.88 \\
Focal loss~\cite{lin2017focal} &64.89 &36.53 &8.19 &43.60 \\
$\tau$-norm~\cite{Kang2020Decoupling} &60.15 &46.82 &31.35 &49.85 \\
Adjust logit~\cite{menon2021longtail} &61.43 &47.56 &28.18 &50.27 \\
\gls{reslt}~\cite{cui2023reslt} &54.16 &50.85 &40.26 &50.68 \\
\gls{crt}~\cite{Kang2020Decoupling} &61.86 &45.89 &26.97 &49.46 \\
\gls{ldam}~\cite{cao2019learning} &64.36 &46.89 &25.57 &50.72 \\
Balanced Softmax~\cite{Ren2020balms} &62.09$\pm$0.84 & 48.36$\pm$0.82 & 29.36$\pm$0.89 & 51.06$\pm$0.82 \\
\gls{cbs} (our) &61.13$\pm$0.70 & 49.16$\pm$0.69 & 32.91$\pm$0.67 & \textbf{51.56$\pm$0.65} \\\midrule
\multicolumn{5}{c}{ResNet50 + a non-softmax-based classifier} \\\midrule
CAL~\cite{yanneural} &- &- &- &49.70 \\
DisA~\cite{gao2024distribution} &67.70 &38.60 &7.30 &44.80 \\\midrule
\multicolumn{5}{c}{ResNet50 + Contrastive learning + softmax-based classifier} \\\midrule
BCL+Balanced Softmax~\cite{zhu2022balanced} &66.04 &53.85 &36.26 &56.15 \\
BCL+\gls{cbs} (our) &{64.58} &{54.35} &{39.90} &\textbf{56.32} \\
ProCo+Balanced Softmax~\cite{du2024probabilistic} &66.44 &54.75 &37.16 &56.86 \\
ProCo+\gls{cbs} (our) &{65.28} &{55.38} &{41.28} &\textbf{57.27} \\
\bottomrule
\end{tabular}}
\end{table}
\begin{table}[!t]
   \centering\small
   \caption{The testing recall obtained by using different methods on Place-LT.
   For Balanced Softmax and \gls{cbs}, results represent the mean $\pm$ standard deviation across five independent runs.
Bold values indicate the highest average recall across the compared methods.}\label{tab:test_recall_place_lt}
\resizebox{\textwidth}{!}{
\begin{tabular}{lrrrrr}\toprule
\multirow{2}{*}{Method} &\multicolumn{4}{c}{Testing recall} \\\cmidrule{2-5}
&Head &Medium &Tail &All \\\midrule
\multicolumn{5}{c}{ResNet152 + classifier} \\\midrule
Baseline &38.52 &18.28 &3.51 &22.67 \\
Focal loss &39.36 &18.20 &4.66 &23.16 \\
$\tau$-norm &34.85 &29.16 &21.48 &29.71 \\
Adjust logit &36.11 &28.53 &17.86 &29.18 \\
\gls{reslt} &34.72 &28.93 &19.10 &29.10 \\
\gls{crt} &37.32 &25.76 &14.41 &27.70 \\
\gls{ldam} &36.37 &16.06 &5.08 &21.22 \\
Balanced Softmax &36.65$\pm$0.57 & 28.35$\pm$0.47 & 18.52$\pm$0.66 & 29.42$\pm$0.45 \\
\gls{cbs}(our) &34.23$\pm$0.64 & 29.37$\pm$0.50 & 24.20$\pm$0.48 & \textbf{30.11$\pm$0.45} \\\midrule
\multicolumn{5}{c}{ResNet152 + Contrastive learning + classifier} \\\midrule
BCL+Balanced Softmax~\cite{zhu2022balanced} &39.89 &35.23 &22.92 &34.50 \\
BCL+\gls{cbs}(our) &{37.59} &{36.66} &{27.11} &\textbf{35.13} \\
ProCo+Balanced Softmax~\cite{du2024probabilistic} &40.44 &34.71 &23.21 &34.53 \\
ProCo+\gls{cbs}(our) &{37.76} &{36.05} &{27.06} &\textbf{34.92} \\
\bottomrule
\end{tabular}}
\end{table}

As demonstrated by the results in \Cref{tab:test_recall_cifars,tab:test_recall_imagenet_lt,tab:test_recall_place_lt}, our \gls{cbs} achieves the highest average testing recall across all classes, consistently outperforming existing re-balancing methods. This superiority persists even when the quality of the underlying representations is enhanced. Specifically, \gls{cbs} continues to yield better results than Balanced Softmax when integrated with state-of-the-art representation learning frameworks.

Furthermore, the empirical results highlight a pervasive challenge in re-balancing literature: the inherent trade-off where gains in tail-class recall are often achieved at the expense of head-class performance. This limitation, as shown in \Cref{tab:test_recall_imagenet_lt,tab:test_recall_place_lt} for the BCL+\gls{cbs} and ProCo+\gls{cbs} configurations, is effectively mitigated when a robust re-balancing strategy is coupled with high-capacity representation models.

\mrev{Finally, the statistical analysis presented in \Cref{fig:bs_cbs_img_place_box} confirms that \gls{cbs} achieves significantly higher ($p$-value $<$ 0.05) average testing recall across all classes compared to Balanced Softmax. This improvement is accompanied by a large effect size (Cohen's $d > 0.8$), further validating the efficacy of our approach.}

\subsubsection{Scalability of the \texorpdfstring{\gls{cbs}}{CBS}}~\label{sec:scalability}
This section evaluates the scalability of \gls{cbs} under the extreme conditions typical of open-world imbalance, specifically where the class count is exceptionally high, and the data distribution is severely imbalanced.
For the former case, we consider the iNaturalist2018 benchmark, which contains 8,142 classes. For the latter, we utilise a dataset constructed from LVIS-V1~\cite{gupta2019lvis}, which exhibits an extreme imbalance ratio exceeding 1:50,000 (see \Cref{sec:datasets} for details). One challenge in these large-scale scenarios is that most existing rebalancing methods require extensive, dataset-specific parameter tuning. Given the lack of established optimal configurations for many baseline methods on our LVIS-based dataset, we limit our comparison to the state-of-the-art Balanced Softmax.
\rev{The results are presented in \Cref{tab:test_recall_int18,tab:test_recall_lvis}, where, consistent with the methodology in \Cref{sec:better_recall}, we report the $\mathrm{mean} \pm \mathrm{standard deviation}$ over five independent runs for both \gls{cbs} and Balanced Softmax. Furthermore, the statistical performance of these two methods is visualised via the box plots in \Cref{fig:bs_cbs_int_lvis_box}.}

\begin{figure}[!tp]
    \centering
    \includegraphics[width=\linewidth]{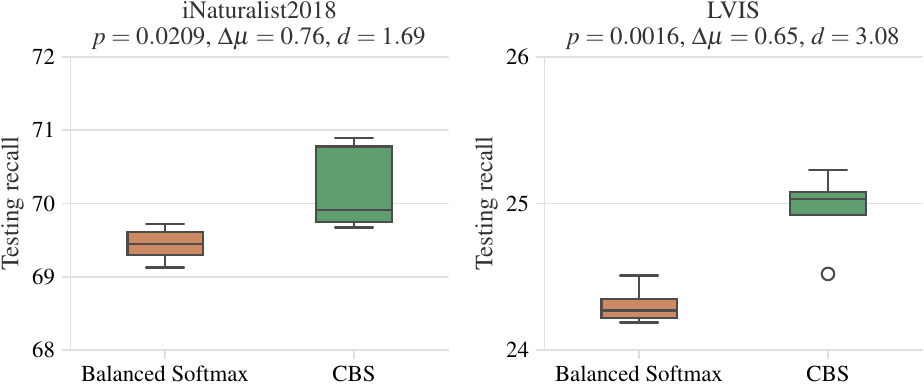}
    \caption{\mrev{Statistical comparison of the average testing recall across all classes for Balanced Softmax and \gls{cbs} on iNaturalist2018 and LVIS-based dataset. The box plots represent the distribution of results over five independent runs. Statistical significance, $p$-value, is determined via Welch’s $t$-test, where $\Delta\mu$ denotes the mean difference and $d$ represents Cohen’s $d$ effect size.}}
    \label{fig:bs_cbs_int_lvis_box}
\end{figure}

\begin{table}[!t]
   \centering\small
   \caption{The testing recall obtained by using different methods on iNaturalist2018. For Balanced Softmax and \gls{cbs}, results represent the mean $\pm$ standard deviation across five independent runs.
Bold values indicate the highest average recall across the compared methods.}\label{tab:test_recall_int18}
   \begin{tabular}{lrrrrr}
      \toprule
      \multirow{2}{*}{Method} & \multicolumn{4}{c}{Testing recall} \\\cmidrule{2-5}
      &Head &Medium &Tail &All     \\\midrule
      Baseline      & 75.26                         & 66.24                         & 60.03 & 64.71                         \\
      Focal loss        & 73.56                         & 64.65                         & 57.48                         & 62.73                         \\
      $\tau$-norm        & 70.27                         & 68.52                         & 68.74                         & 68.79                         \\
      Adjust logit       & 69.28                         & 69.27                         & 69.83                         & 69.49                         \\
      \gls{reslt} & 68.49                         & 66.23 & 74.21                         & 69.62                         \\
      \gls{crt}   & 72.88                         & 69.05                         & 66.24                         & 68.34                         \\
      \gls{ldam}   & 64.09                         & 61.47                         & 59.07                         & 60.79 \\
      Balanced Softmax   & 69.26$\pm$0.28 &  69.33$\pm$0.34 & 69.63$\pm$0.39 & \text{69.44$\pm$0.24} \\
      \gls{cbs} (our)   & 70.67$\pm$0.69 & 70.21$\pm$0.70 & 70.06$\pm$0.44 & \textbf{70.20$\pm$0.59} \\
      \bottomrule
   \end{tabular}
\end{table}

\begin{table}[!t]\centering\small
\caption{The testing recall was obtained by using different methods on the LVIS-based image classification dataset.For Balanced Softmax and \gls{cbs}, results represent the mean $\pm$ standard deviation across five independent runs.
Bold values indicate the highest average recall across the compared methods.}\label{tab:test_recall_lvis}
\begin{tabular}{lrrrrr}\toprule
Method &Head &Medium &Tail &All \\\midrule
Softmax &45.54 &8.99 &0.33 &18.87 \\
Balanced Softmax & 42.97$\pm$0.18 & 23.92$\pm$0.18 & 2.42$\pm$0.30 & {24.31$\pm$0.13}  \\
\gls{cbs} (our) &40.85$\pm$0.13 & 26.81$\pm$0.65 & 3.33$\pm$0.11 &  \textbf{24.96$\pm$0.27}  \\
\bottomrule
\end{tabular}
\end{table}

\mrev{The results presented in \Cref{tab:test_recall_int18,tab:test_recall_lvis} and \Cref{fig:bs_cbs_int_lvis_box} demonstrate that \gls{cbs} consistently outperforms Balanced Softmax under these extreme circumstances. Statistical tests confirm that the improvement is significant ($p$-value $<$ 0.05) and substantial, as evidenced by a large effect size (Cohen's $d > 0.8$).} This performance gap suggests that \gls{cbs} is more robust to handle open-world imbalanced classification, where the class space is vast, and the tail is extremely sparse.
The scalability of \gls{cbs} is derived from its principled ability to recalibrate the model’s internal preference issue. This makes \gls{cbs} particularly well-suited for real-world applications where the testing distribution is unknown, and the imbalance is severe.

\subsubsection{The \texorpdfstring{\gls{cbs}}{CBS} balances the model via gradients}~\label{sec:better_gradients}

In \Cref{sec:less_preference_issue,sec:better_recall,sec:scalability}, we demonstrated that our \gls{cbs} outperforms state-of-the-art methods in both predictive performance and the mitigation of the preference issue. This section investigates \gls{cbs} from the more fundamental perspective of gradient behaviour to elucidate the underlying mechanisms behind its superior performance, providing an intuitive interpretation that aligns with the theoretical derivation in \Cref{sec:gbs}.

\Cref{fig:reward_penalty_cbs} illustrates the balance of gradients across classes for various training objectives. Each value in the figure represents the average relative difference between rewards and penalties across head, medium, and tail classes, calculated as $\frac{\mathtt{reward} - \mathtt{penalty}}{\mathtt{reward}} \times 100$. These metrics, as defined in \Cref{sec:ib_train_data_impact}, were collected over a single epoch without parameter updates to isolate the raw gradient signals.

The results in \Cref{fig:reward_penalty_cbs} show that for balanced datasets (CIFAR-10, CIFAR-100, and ImageNet-1K), gradients are distributed relatively uniformly across classes. However, on imbalanced datasets, models trained with standard Softmax allocate significantly higher weight to head classes, exacerbating the preference issue—a finding consistent with our theoretical analysis in \Cref{sec:ib_train_data_impact}. \gls{cbs} effectively addresses this by amplifying the gradient signals for tail classes. In certain scenarios, it assigns greater weight to tail classes than to head classes, thereby compensating for the substantial generalisation gap that typically affects tail classes due to data scarcity.

\begin{figure}[!tp]
    \centering
    \includegraphics[width=\textwidth]{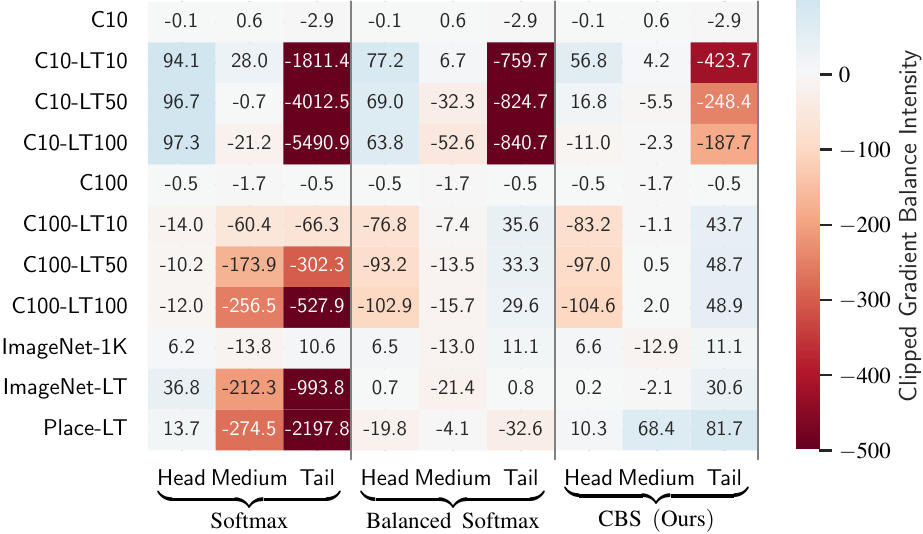}
    \caption{The higher values indicate greater emphasis by the model regarding gradients during training. (1) The values confirm the limitation of the softmax for handling imbalanced classification. (2) Our \gls{cbs} rebalance the gradients during the training, which leads to a more balanced model.
    }\label{fig:reward_penalty_cbs}
\end{figure}

\section{Conclusion}\label{sec:conclusion}
\rev{
This paper has investigated the fundamental challenges of imbalanced image classification. \mrev{We have proposed the \gls{cbs}, which is a simple logit adjustment approach and outperforms existing rebalancing methods across a variety of long-tailed benchmarks.} Our analysis also demonstrated that the \gls{cbs} effectively balances gradients during the training process, thereby mitigating the preference issue that we identified as an intrinsic symptom of models trained on imbalanced data.}

\rev{
Despite these strengths, this work highlights several limitations that provide opportunities for future research:
\begin{itemize}
    \item \mrev{The power-law functions that we currently adopted heuristically are only supported empirically in the studied settings. They are not established as a universal property of all long-tailed recognition problems.
    In addition, the power-law functions assume that the number of training samples is the sole determinant of class probability distributions. However, other factors, such as data diversity, class-specific noise, and intra-class variance, likely influence these distributions. Developing more sophisticated models that incorporate these factors is a promising direction}
    \item While we established a systematic grid search for the hyperparameter $\beta$, the performance of the \gls{cbs} remains dependent on its value. Future research could explore architectures that allow the model to directly learn the optimal $\beta$ value during training.
    \item Like other rebalancing methods, the \gls{cbs} also suffers from the head-tail trade-off, where gains in tail-class recall often come at the cost of head-class accuracy. To overcome this, integrating the \gls{cbs} with advanced representation learning techniques, such as contrastive learning, may offer a way to improve tail performance without sacrificing head-class recall.
    \item Finally, this study focused primarily on image classification. However, the mathematical foundations of the \gls{cbs} are generalisable. Extending this framework to other imbalanced learning tasks, such as long-tailed instance segmentation and object detection, remains an important next step for the field.
\end{itemize}
}

\section*{Code Availability}
\rev{The code is available at \href{https://github.com/YihangZhu/Class-Balanced-Softmax.git}{https://github.com/YihangZhu/Class-Balanced-Softmax.git}.}

\section*{Acknowledgments}
Both Yi-Hang Zhu and Xinyu Yang are supported by College of Science and Engineering Scholarships from the University of Leicester.
Shiqi Su is supported by a PhD studentship from the Science and Technology Facilities Council (STFC) and a College of Science and Engineering Scholarship from the University of Leicester.
We thank Sulis (HPC facility) for providing computational
resources to support this research.

\section*{Declaration of generative AI and AI-assisted technologies in the manuscript preparation process}
\rev{During the preparation of this work, the authors used Gemini (a large language model built by Google) in order to improve the language, readability, and grammatical accuracy of the manuscript. After using this tool, the authors reviewed and edited the content as needed and take full responsibility for the content of the published article.}

\appendix
\renewcommand{\thesection}{\Alph{section}}  
\crefalias{section}{appendix}
\crefname{appendix}{Appendix}{Appendices}
\Crefname{appendix}{Appendix}{Appendices}

\section{Imbalanced gradients}\label{app:ib_grad}
\rev{When the model is at its initial state, we assume $p_{nc} = \frac{1}{\card{\set{C}}}$ and $h_{ni} = v$, then $\Delta_{ci}^{\note{grad}}$ is proportional to $\card{\set{C}}\card{\set{N}_{c}}-\card{\set{N}}$.
\begin{proof}
    \begin{gather}
       \Delta^{\note{grad}}_{ci}
       = \sum_{n\in \set{N}_{c}}(1-p_{nc}){h}_{ni} - \sum_{d\in \set{C}\backslash \{c\}}\sum_{n\in \set{N}_{d}} p_{nc}{h}_{ni} \\
       = \upsilon(\frac{\card{\set{N}_{c}}(\card{\set{C}}-1)}{\card{\set{C}}} - \frac{\card{\set{N}} - \card{\set{N}_{c}}}{\card{\set{C}}}) \\
       \propto \card{\set{C}}\card{\set{N}_{c}} - \card{\set{N}_{c}}   - \card{\set{N}} + \card{\set{N}_{c}}  \\
       \propto  \card{\set{C}}\card{\set{N}_{c}}   - \card{\set{N}}.
    \end{gather}
\end{proof}
}

\section{Proof for Theorem~\ref{thm:two_p}}\label{app:proof_thm_two_p}
This proof is built based on the work of~\cite{Ren2020balms}.
\begin{proof}
    The exponential family parametrisation of the multinomial distribution gives us the standard softmax function as the canonical response function
    \begin{equation}\label{appeq:softmax}
        \phi_{nc} = \frac{e^{z_{nc}}}{\sum_{d\in\set{C}} e^{z_{nd}}}
    \end{equation}
    and also the canonical link function
    \begin{equation}\label{appeq:link_func}
        z_{nc} = \log\frac{\phi_{nc}}{\phi_{nk}}
    \end{equation}
    with $k=\card{\set{C}}-1$.
    Let $\hat{\phi}_{nc}$ be the probability for sample $n$ and class $c$ produced by another model on the data. By adding term $\log\frac{\hat{\phi}_{nc}}{\phi_{nc}}$,
    \begin{gather}
       \rightarrow z_{nc}+\log\frac{\hat{\phi}_{nc}}{\phi_{nc}} = \log\frac{\phi_{nc}}{\phi_{nk}} + \log\frac{\hat{\phi}_{nc}}{\phi_{nc}} = \log \frac{\hat{\phi}_{nc}}{\phi_{nk}} \\
       \rightarrow \phi_{nk} e^{z_{nc}+\log\frac{\hat{\phi}_{nc}}{\phi_{nc}}} = \hat{\phi}_{nc}\label{appeq:pnk_hatpnc}
    \end{gather}

    \begin{gather}
        \rightarrow \sum_{c\in \set{C}} \phi_{nk} e^{z_{nc}+\log\frac{\hat{\phi}_{nc}}{\phi_{nc}}} = \sum_{c\in \set{C}} \hat{\phi}_{nc} = 1 \\
       \rightarrow \phi_{nk} = \frac{1}{\sum_{c\in \set{C}}e^{z_{nc}+\log\frac{\hat{\phi}_{nc}}{\phi_{nc}}}} \label{appeq:pnk}
    \end{gather}
    \rev{By combining \cref{appeq:pnk_hatpnc,appeq:pnk},
    \begin{gather}
        \rightarrow \hat{\phi}_{nc}  = \phi_{nk} e^{z_{nc}+\log\frac{\hat{\phi}_{nc}}{\phi_{nc}}}
        = \frac{e^{z_{nc}+\log\frac{\hat{\phi}_{nc}}{\phi_{nc}}}}{\sum_{d\in \set{C}}e^{z_{nd}+\log\frac{\hat{\phi}_{nd}}{\phi_{nd}}}}
        = \frac{\frac{\hat{\phi}_{nc}}{\phi_{nc}}e^{z_{nc}}}{\sum_{d\in \set{C}}\frac{\hat{\phi}_{nd}}{\phi_{nd}}e^{z_{nd}}} \label{appeq:hatphi}
    \end{gather}
    Given that
    \begin{gather}
        \hat{\phi}_{nc} = \hat{p}(y=c|x) = \frac{p(x|y=c)\hat{p}(y=c)}{p(x)}\\
        \phi_{nc} = p(y=c|x) = \frac{p(x|y=c)p(y=c)}{p(x)},
    \end{gather}
    we have
    \begin{equation}
        \frac{\hat{\phi}_{nc}}{\phi_{nc}}
        = \frac{\frac{p(x|y=c)\hat{p}(y=c)}{p(x)}}{\frac{p(x|y=c)p(y=c)}{p(x)}}
        = \frac{\hat{p}(y=c)}{p(y=c)}. \label{appeq:hatp_p}
    \end{equation}
    Following \cref{appeq:hatphi} and \cref{appeq:hatp_p}, we have
    \begin{gather}
        \rightarrow \hat{\phi}_{nc}
        = \frac{\frac{\hat{p}(y=c)}{p(y=c)}e^{z_{nc}}}{\sum_{d\in \set{C}} \frac{\hat{p}(y=d)}{p(y=d)}e^{z_{nd}}}\label{appeq:two_p}.
    \end{gather}
    Following \cref{appeq:softmax,appeq:link_func,appeq:pnk},
    we have
    \begin{equation}
        \phi_{nk} = \frac{1}{\sum_{c\in \set{C}}e^{z_{nc}}}
        = \frac{1}{\sum_{c\in \set{C}}e^{z_{nc}+\log\frac{\hat{\phi}_{nc}}{\phi_{nc}}}}
        = \frac{1}{\sum_{c\in \set{C}}\frac{\hat{\phi}_{nc}}{\phi_{nc}}e^{z_{nc}}}
    \end{equation}
    \begin{equation}
        \rightarrow
        \sum_{c\in \set{C}}e^{z_{nc}} = \sum_{c\in \set{C}}\frac{\hat{\phi}_{nc}}{\phi_{nc}}e^{z_{nc}}\label{appeq:constraint}
    \end{equation}}
    Therefore, the probability distributions $\hat{p}(y=c)$ and $p(y=c)$ in \cref{appeq:two_p} are constrained by \cref{appeq:constraint}.
\end{proof}

\section{Balanced model class probability distribution}\label{app:p_y=c_proof}
\begin{proof}
    \rev{For an ideally balanced model, the probability produced by the model for each sample $n$ and class $c$ is assumed to be
    \begin{equation}
        p(y=c|n) = 1-\epsilon \quad \forall n\in \set{N}_{c}, c\in \set{C}, 0<\epsilon<1.
    \end{equation}
    This assumption is supported by our empirical results in \Cref{ssec:target_probs}.
    Following this setup, the deviation between the ${p}(y=c)$ and $\frac{\card{\set{N}_{c}}}{\card{N}}$ is as follows:
    \begin{gather}
        {p}(y=c) - \frac{\card{\set{N}_{c}}}{\card{N}} = \sum_{n\in \set{N}}p(y=c|n)p(n) - \frac{\card{\set{N}_{c}}}{\card{N}} \\
        = \frac{1}{\card{\set{N}}}(\card{\set{N}_{c}}(1-\epsilon) + (\card{\set{N}}-\card{\set{N}_{c}})\frac{\epsilon}{\card{\set{C}}-1}) - \frac{\card{\set{N}_{c}}}{\card{N}} \\
        \approx \frac{1}{\card{\set{N}}}(\card{\set{N}_{c}}(1-\epsilon) + (\card{\set{N}}-\card{\set{N}_{c}})\frac{\epsilon}{\card{\set{C}}}) - \frac{\card{\set{N}_{c}}}{\card{N}} \\
        \propto \card{\set{N}_{c}}(1-\epsilon) + (\card{\set{N}}-\card{\set{N}_{c}})\frac{\epsilon}{\card{\set{C}}} - \card{\set{N}_{c}} \\
        \propto \card{\set{N}_{c}}(1-\epsilon)\card{\set{C}} + (\card{\set{N}}-\card{\set{N}_{c}})\epsilon - \card{\set{N}_{c}}\card{\set{C}} \\
        = \card{\set{N}_{c}}\card{\set{C}}-\card{\set{N}_{c}}\card{\set{C}}\epsilon + \card{\set{N}}\epsilon-\card{\set{N}_{c}}\epsilon - \card{\set{N}_{c}}\card{\set{C}}\\
        = \card{\set{N}}\epsilon - \card{\set{N}_{c}}\epsilon - \card{\set{N}_{c}}\card{\set{C}}\epsilon \\
        \propto \card{\set{N}} - \card{\set{N}_{c}}(1+\card{\set{C}}) \\
        \approx \card{\set{N}} - \card{\set{N}_{c}}\card{\set{C}}
    \end{gather}
    Therefore, if $\card{\set{N}_{c}}\card{\set{C}} > \card{\set{N}}$, we have ${p}(y=c) < \frac{\card{\set{N}_{c}}}{\card{N}}$, otherwise, we have ${p}(y=c) \ge \frac{\card{\set{N}_{c}}}{\card{N}}$.}
\end{proof}

\rev{\section{Imbalanced model class probability distribution}\label{app:hat_p_y=c_proof}
This section explores the relation between $\frac{\card{\set{N}_{c}}}{\card{N}}$ and the class probability distribution of an imbalanced model $\hat{p}(y=c)$. Based on our empirical results in \Cref{ssec:target_probs}, we assume
\begin{equation}
        \hat{p}(y=c|n) = 1-\epsilon_{c} \quad \forall n\in \set{N}_{c}, c\in \set{C},
\end{equation}
for the case when the model is imbalanced, and the classes which have a larger $\card{\set{N}_{c}}$ are associated with the smaller $\epsilon_{c}$.
\begin{gather}
        \hat{p}(y=c) - \frac{\card{\set{N}_{c}}}{\card{N}} = \sum_{n\in \set{N}}\hat{p}(y=c|n)p(n) - \frac{\card{\set{N}_{c}}}{\card{N}} \\
        = \frac{1}{\card{\set{N}}}(\card{\set{N}_{c}}(1-\epsilon_{c}) + \sum_{d\in\set{C}\backslash\{c\}}\card{\set{N}_{d}}\frac{\epsilon_{d}}{\card{\set{C}}-1}) - \frac{\card{\set{N}_{c}}}{\card{N}} \\
        \approx \frac{1}{\card{\set{N}}}(\card{\set{N}_{c}}(1-\epsilon_{c}) + \sum_{d\in\set{C}\backslash\{c\}}\card{\set{N}_{d}}\frac{\epsilon_{d}}{\card{\set{C}}}) - \frac{\card{\set{N}_{c}}}{\card{N}} \\
        \propto \card{\set{N}_{c}}(1-\epsilon_{c}) + \sum_{d\in\set{C}\backslash\{c\}}\card{\set{N}_{d}}\frac{\epsilon_{d}}{\card{\set{C}}} - \card{\set{N}_{c}} \\
        = \sum_{d\in\set{C}\backslash\{c\}}\card{\set{N}_{d}}\epsilon_{d} - \card{\set{N}_{c}}\card{\set{C}}\epsilon_{c}
    \end{gather}
    For a class with large $\card{\set{N}_{c}}$, $\epsilon_{c}$ is low and $\epsilon_{d}$ is relatively large.
    This leads to
    \begin{equation}
        \hat{p}(y=c) - \frac{\card{\set{N}_{c}}}{\card{N}} > 0.
    \end{equation}
    Otherwise, for a class with small $\card{\set{N}_{c}}$, $\epsilon_{c}$ is large and $\epsilon_{d}$ is relatively low.
    This leads to
    \begin{equation}
        \hat{p}(y=c) - \frac{\card{\set{N}_{c}}}{\card{N}} < 0.
    \end{equation}
}


\section{Statistical Results for Long-tailed CIFAR Benchmarks}\label{ssec:cifar_statistic}

\mrev{Figure~\ref{fig:bs_cbs_cifar_box} illustrates the performance distribution across five independent runs for \gls{cbs} and Balanced Softmax. On the C10-LT benchmarks (imbalance factors 10, 50, and 100), \gls{cbs} achieves a statistically significant improvement in testing recall ($p < 0.05$). For the C100-LT benchmarks, while the $p$-values for C100-LT50 and C100-LT100 exceed the traditional significance threshold, the calculated Cohen’s $d$ ($d > 0.2$) suggests a small to medium effect size, indicating a practically meaningful trend toward improvement. In contrast, performance on C100-LT10 is comparable between the two methods. This convergence suggests that the advantages of \gls{cbs} are more pronounced as the data imbalance level increases, a trend also observed in the C10-LT results where the performance margin widens from the LT10 to the LT100 setting. These results suggest that \gls{cbs} is particularly effective in high-imbalance scenarios compared to Balanced Softmax.}

\begin{figure}[!tp]
    \centering
    \includegraphics[width=\linewidth]{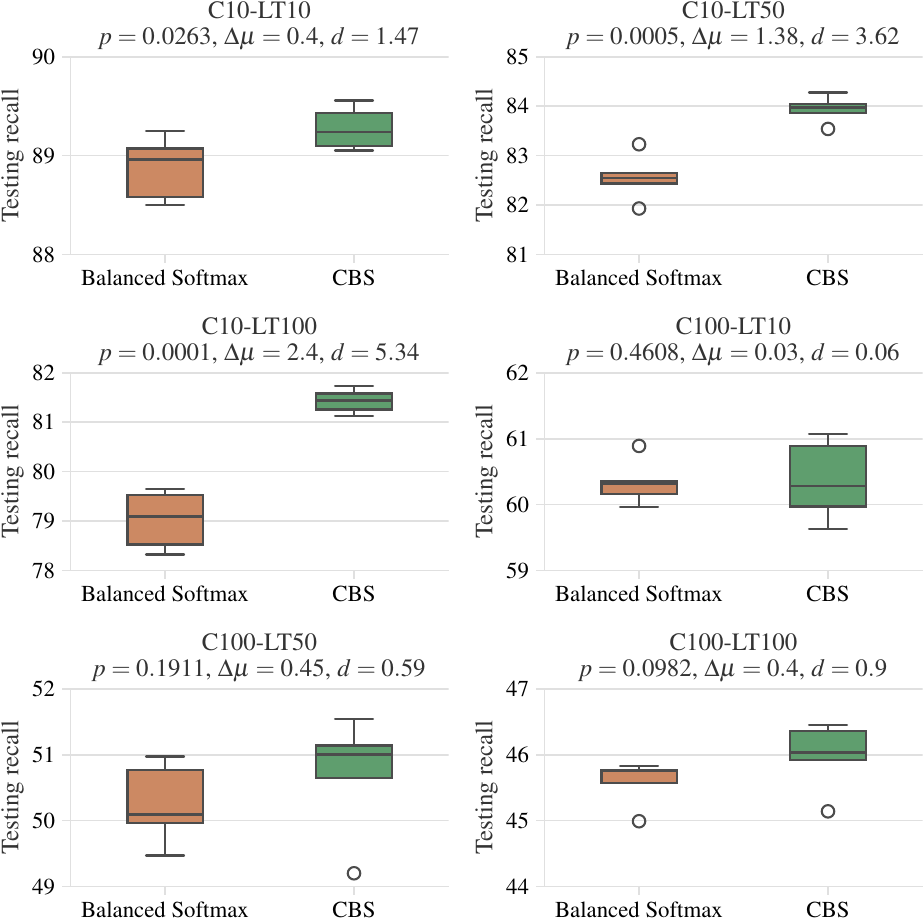}
    \caption{\mrev{Statistical comparison of testing recall on
    long-tailed CIFAR datasets. The box plots illustrate the distribution
    of average testing recall across all classes over five independent
    runs for both Balanced Softmax and \gls{cbs}. Statistical significance, $p$-value, is determined via Welch’s $t$-test, where $\Delta\mu$ denotes the mean difference and $d$ represents Cohen’s $d$ effect size. }}
    \label{fig:bs_cbs_cifar_box}
\end{figure}

\section{Probabilities of Target Classes}
\label{ssec:target_probs}

This section analyses how model balance affects the target class
probabilities predicted by the model. To this end, we evaluate two ResNet50 models: one trained on ImageNet-1K to ensure class-wise balance, and another trained on ImageNet-LT, resulting in an imbalanced model with a severe preference issue.
\Cref{appfig:target_prob} illustrates the mean target probabilities calculated by both models on the ImageNet-LT training set. The results indicate that the balanced model yields consistent target probability values across all classes. In contrast, the imbalanced model assigns significantly higher probabilities to target classes that have more training samples, a phenomenon commonly identified in the literature as the
imbalanced confidence issue~\cite{pan2021model}.

\begin{figure}[!tp]
    \centering
    \includegraphics[width=\textwidth]{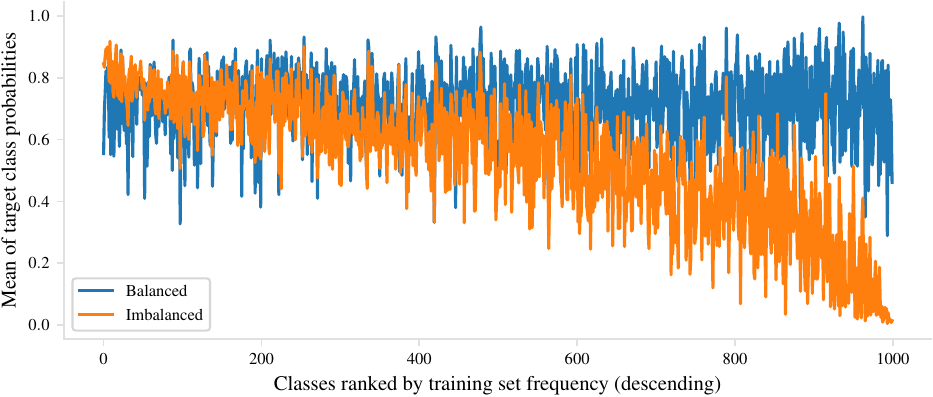}
    \caption{\textbf{Comparison of target class probabilities between
    balanced and imbalanced models.} The plot shows the mean target
    class probability calculated on the ImageNet-LT training data for
    both model types.}
    \label{appfig:target_prob}
\end{figure}

\section{Limitations of the Balanced Softmax}~\label{ssec:bs_limits}


The Balanced Softmax~\cite{Ren2020balms} remains the state-of-the-art rebalancing method for long-tailed learning.
However, the models trained with the Balanced Softmax still exhibit imbalanced testing performance: the head classes have higher testing recall than the tail classes. This contradicts the theory behind the Balanced Softmax.
In the following, we discuss three limitations of the Balanced Softmax that may be the cause.

\paragraph{Limitation 1}
The Balanced Softmax is derived by assuming $\hat{p}(y=c) = \frac{\card{\set{N}_{c}}}{\card{\set{N}}}$.
According to \Cref{tab:p_b_c}, $\hat{p}(y=c)$ is, in practice, higher than $\frac{\card{\set{N}_{c}}}{\card{\set{N}}}$ for head classes and lower for tail classes.
As a result, the additional term in the Balanced Softmax is too small for head classes and too large for tail classes.

\paragraph{Limitation 2}
\begin{corol}
   \label{cor:balance}
   Following Theorem~\ref{thm:two_p} and the derivation of the Balanced Softmax, we have
   \begin{equation}\label{eq:p_b_bs}
      p(y=c) = \frac{1}{\card{\set{N}}}\sum_{n \in \set{N}} \frac{e^{z_{nc}}}{\sum_{d\in \set{C}} e^{z_{nd}}} = \frac{1}{\card{\set{C}}}.
   \end{equation}
\end{corol}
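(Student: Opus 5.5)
The plan is to read off the Balanced Softmax derivation as an instance of Theorem~\ref{thm:two_p} and then apply the law of total probability, \cref{eq:p_y=c_total}, to Model A.

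\textbf{Step 1: fix the two priors.} In the Balanced Softmax derivation, Model B is the model trained on the imbalanced data, so its prior is the empirical class frequency,
\begin{equation*}
\hat{p}(y=c)=\frac{\card{\set{N}_{c}}}{\card{\set{N}}}.
\end{equation*}
The desired Model A is the balanced one, so its target prior is uniform:
\begin{equation*}
p(y=c)=\frac{1}{\card{\set{C}}}.
\end{equation*}
Substituting these into \cref{eq:general_balanced_softmax}, the factor $\frac{\hat{p}(y=c)}{p(y=c)}=\frac{\card{\set{C}}\card{\set{N}_{c}}}{\card{\set{N}}}$ reduces, after cancelling the class-independent constant, to the Balanced Softmax weight $\card{\set{N}_{c}}$. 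This confirms that Balanced Softmax is exactly this special case. It also fixes the second equality of \cref{eq:p_b_bs}: it is the prior that the method implicitly imposes on Model A.

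\textbf{Step 2: compute $p(y=c)$ by marginalisation.} By hypothesis, Model A's posterior is the standard softmax of the shared logits, $\phi_{nc}=\frac{e^{z_{nc}}}{\sum_{d}e^{z_{nd}}}$, as in \cref{appeq:softmax}. The marginal $p(n)$ is taken as the empirical measure on the training set, $p(n)=\frac{1}{\card{\set{N}}}$. Then \cref{eq:p_y=c_total} gives directly
\begin{equation*}
p(y=c)=\frac{1}{\card{\set{N}}}\sum_{n\in\set{N}}\frac{e^{z_{nc}}}{\sum_{d\in\set{C}}e^{z_{nd}}},
\end{equation*}
which is the first equality of \cref{eq:p_b_bs}.

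\textbf{Step 3: chain the equalities.} Steps 1 and 2 describe the same quantity $p(y=c)$ for Model A. Equating them yields \cref{eq:p_b_bs}: the average softmax output over the training set must equal $\frac{1}{\card{\set{C}}}$ for every class.

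\textbf{Main obstacle.} The hard part is justifying that the two expressions for $p(y=c)$ coincide. This requires the consistency assumption that the prior appearing in Theorem~\ref{thm:two_p} is the same object as the marginal computed from Model A's posteriors under the empirical $p(n)$. To support this, I would invoke the normalisation constraint \cref{appeq:constraint} from the proof of Theorem~\ref{thm:two_p}, noting that it is consistent with these priors. I would then state explicitly that the corollary follows as a \emph{consequence of the derivation's assumptions}, not as an unconditional identity. That framing is what Limitation 2 needs: an ideally balanced model on an imbalanced dataset actually satisfies Theorem~\ref{thm:p_y=c_pattern}, not the uniform law \cref{eq:p_b_bs}.
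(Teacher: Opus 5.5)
Your proposal is correct and is essentially the paper's argument. The paper also identifies Balanced Softmax as the instance of Theorem~\ref{thm:two_p} with $\hat{p}(y=c)=\card{\set{N}_{c}}/\card{\set{N}}$, and reads $p(y=c)=1/\card{\set{C}}$ off the factor $\card{\set{C}}\hat{p}(y=c)$ in the formula of~\cite{Ren2020balms}, which is the converse of your substitute-and-verify check. Your Step~2 just makes explicit the use of \cref{eq:p_y=c_total} with $p(n)=1/\card{\set{N}}$ for the first equality, which the paper leaves implicit.

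One caution concerns your last paragraph. The appeal to \cref{appeq:constraint} is unnecessary, because the identification in Step~3 is already built into \cref{eq:p_y=c_total}. It also would not support that identification: with these priors the constraint reads $\sum_{c\in\set{C}}\card{\set{N}_{c}}\phi_{nc}=\card{\set{N}}/\card{\set{C}}$ for every $n$. That generally fails for individual samples; only its average over $n$ agrees with the corollary.
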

\begin{proof}
   Following Eqs.(21)-(23) in~\cite{Ren2020balms} and their notation, we have
   \begin{gather}
      \hat{\phi}_{j}
      = \frac{k\hat{p}(y=j)e^{\eta_{j}  }}{\sum_{i=1}^{k} k\hat{p}(y=i) e^{\eta_{i}}}, \label{eq:bs_p_hat}
   \end{gather}
   where $\eta_{j}$ is the output of the linear classifier for class $j$ and a training sample. Constant $k$ is the total number of classes.
   Using the notation in our paper, we rewrite \cref{eq:bs_p_hat} as
   \begin{equation}
      \hat{p}(y=c|n) = \frac{\card{\set{C}}\hat{p}(y=c)e^{z_{nc}}}{\sum_{d\in\set{C}} \card{\set{C}}\hat{p}(y=d)e^{z_{nd}}}
   \end{equation}
   Together with \cref{appeq:two_p}, we have
   \begin{equation}
      p(y=c) = \frac{1}{\card{\set{C}}}
   \end{equation}
\end{proof}

The term $\frac{1}{\card{\set{C}}}$ in \cref{eq:p_b_bs} originates from the perspective used by~\cite{Ren2020balms} to derive the Balanced Softmax. Specifically, they assume balanced testing data distribution is known, and the probability of class $c$ in the testing data equals $\frac{1}{\card{\set{C}}}$.
However, in many cases, e.g., LVIS dataset~\cite{gupta2019lvis}, the testing data is also imbalanced, and its distribution is unknown. The derivation of the Balanced Softmax in these scenarios is not applicable.

\paragraph{Limitation 3}
The generalisation error bound Theorem developed by~\cite{Ren2020balms} is based on the assumption that all the classes have positive margins. This means that all the classes have zero training error, which is unlikely to be achieved when the dataset is imbalanced, particularly when the number of classes is large, e.g., ImageNet-LT.

\bibliographystyle{elsarticle-num}
\bibliography{references}

@article{pan2021model,
  title={On model calibration for long-tailed object detection and instance segmentation},
  author={Pan, Tai-Yu and Zhang, Cheng and Li, Yandong and Hu, Hexiang and Xuan, Dong and Changpinyo, Soravit and Gong, Boqing and Chao, Wei-Lun},
  journal={Proceedings of Neural Information Processing Systems (NeurIPS)},
  volume={34},
  pages={2529--2542},
  year={2021}
}

@article{zhang2025class,
  title={Class-aware Universum Inspired re-balance learning for long-tailed recognition},
  author={Zhang, Enhao and Geng, Chuanxing and Chen, Songcan},
  journal={Pattern Recognition},
  volume={161},
  pages={111337},
  year={2025},
  publisher={Elsevier}
}

@article{tan2024ncl,
  title={NCL++: Nested collaborative learning for long-tailed visual recognition},
  author={Tan, Zichang and Li, Jun and Du, Jinhao and Wan, Jun and Lei, Zhen and Guo, Guodong},
  journal={Pattern Recognition},
  volume={147},
  pages={110064},
  year={2024},
  publisher={Elsevier}
}

@article{xiang2023margin,
  title={Margin-aware rectified augmentation for long-tailed recognition},
  author={Xiang, Liuyu and Han, Jungong and Ding, Guiguang},
  journal={Pattern Recognition},
  volume={141},
  pages={109608},
  year={2023},
  publisher={Elsevier}
}

@article{baik2024dbn,
  title={DBN-Mix: Training dual branch network using bilateral mixup augmentation for long-tailed visual recognition},
  author={Baik, Jae Soon and Yoon, In Young and Choi, Jun Won},
  journal={Pattern Recognition},
  volume={147},
  pages={110107},
  year={2024},
  publisher={Elsevier}
}

@article{liu2024lcreg,
  title={LCReg: Long-tailed image classification with latent categories based recognition},
  author={Liu, Weide and Wu, Zhonghua and Wang, Yiming and Ding, Henghui and Liu, Fayao and Lin, Jie and Lin, Guosheng},
  journal={Pattern Recognition},
  volume={145},
  pages={109971},
  year={2024},
  publisher={Elsevier}
}

@inproceedings{zhu2022balanced,
	title        = {Balanced contrastive learning for long-tailed visual recognition},
	author       = {Zhu, Jianggang and Wang, Zheng and Chen, Jingjing and Chen, Yi-Ping Phoebe and Jiang, Yu-Gang},
	year         = 2022,
	booktitle    = {Proceedings of the IEEE/CVF Conference on Computer Vision and Pattern Recognition (CVPR)},
	pages        = {6908--6917}
}

@article{du2024probabilistic,
	title        = {Probabilistic contrastive learning for long-tailed visual recognition},
	author       = {Du, Chaoqun and Wang, Yulin and Song, Shiji and Huang, Gao},
	year         = 2024,
	journal      = {IEEE Transactions on Pattern Analysis and Machine Intelligence},
	publisher    = {IEEE}
}

@inproceedings{gao2024distribution,
	title        = {Distribution alignment optimization through neural collapse for long-tailed classification},
	author       = {Gao, Jintong and Zhao, He and dan Guo, Dan and Zha, Hongyuan},
	year         = 2024,
	booktitle    = {International Conference on Machine Learning (ICML)}
}

@inproceedings{yanneural,
	title        = {Neural Collapse To Multiple Centers For Imbalanced Data},
	author       = {Yan, Hongren and Qian, Yuhua and Peng, Furong and Luo, Jiachen and Li, Feijiang and others},
	year         = 2024,
	booktitle    = {Proceedings of Neural Information Processing Systems (NeurIPS)}
}

@inproceedings{wang2023yolov7,
	title        = {YOLOv7: Trainable bag-of-freebies sets new state-of-the-art for real-time object detectors},
	author       = {Wang, Chien-Yao and Bochkovskiy, Alexey and Liao, Hong-Yuan Mark},
	year         = 2023,
	booktitle    = {Proceedings of the IEEE/CVF Conference on Computer Vision and Pattern Recognition (CVPR)},
	pages        = {7464--7475}
}

@inproceedings{gupta2019lvis,
	title        = {Lvis: A dataset for large vocabulary instance segmentation},
	author       = {Gupta, Agrim and Dollar, Piotr and Girshick, Ross},
	year         = 2019,
	booktitle    = {Proceedings of the IEEE/CVF Conference on Computer Vision and Pattern Recognition (CVPR)},
	pages        = {5356--5364}
}

@article{yang2022survey,
	title        = {A survey on long-tailed visual recognition},
	author       = {Yang, Lu and Jiang, He and Song, Qing and Guo, Jun},
	year         = 2022,
	journal      = {International Journal of Computer Vision},
	publisher    = {Springer},
	volume       = 130,
	number       = 7,
	pages        = {1837--1872}
}

@inproceedings{he2017mask,
	title        = {Mask r-cnn},
	author       = {He, Kaiming and Gkioxari, Georgia and Doll{\'a}r, Piotr and Girshick, Ross},
	year         = 2017,
	booktitle    = {Proceedings of the IEEE International Conference on Computer Vision (ICCV)},
	pages        = {2961--2969}
}

@article{bengio2013representation,
	title        = {Representation learning: A review and new perspectives},
	author       = {Bengio, Yoshua and Courville, Aaron and Vincent, Pascal},
	year         = 2013,
	journal      = {IEEE Transactions on Pattern Analysis and Machine Intelligence},
	publisher    = {IEEE},
	volume       = 35,
	number       = 8,
	pages        = {1798--1828}
}

@inproceedings{krizhevsky2012imagenet,
	title        = {Imagenet classification with deep convolutional neural networks},
	author       = {Krizhevsky, Alex and Sutskever, Ilya and Hinton, Geoffrey E},
	year         = 2012,
	booktitle      = {Proceedings of Neural Information Processing Systems (NeurIPS)},
	volume       = 25
}

@inproceedings{he2016deep,
	title        = {Deep residual learning for image recognition},
	author       = {He, Kaiming and Zhang, Xiangyu and Ren, Shaoqing and Sun, Jian},
	year         = 2016,
	booktitle    = {Proceedings of the IEEE/CVF Conference on Computer Vision and Pattern Recognition (CVPR)},
	pages        = {770--778}
}

@inproceedings{dosovitskiy2021an,
	title        = {An Image is Worth 16x16 Words: Transformers for Image Recognition at Scale},
	author       = {Alexey Dosovitskiy and Lucas Beyer and Alexander Kolesnikov and Dirk Weissenborn and Xiaohua Zhai and Thomas Unterthiner and Mostafa Dehghani and Matthias Minderer and Georg Heigold and Sylvain Gelly and Jakob Uszkoreit and Neil Houlsby},
	year         = 2021,
	booktitle    = {International Conference on Learning Representations (ICLR)}
}

@inproceedings{wilson2017marginal,
	title        = {The marginal value of adaptive gradient methods in machine learning},
	author       = {Wilson, Ashia C and Roelofs, Rebecca and Stern, Mitchell and Srebro, Nati and Recht, Benjamin},
	year         = 2017,
	booktitle      = {Proceedings of Neural Information Processing Systems (NeurIPS)},
	volume       = 30
}

@inproceedings{loshchilov2017sgdr,
	title        = {{SGDR}: Stochastic Gradient Descent with Warm Restarts},
	author       = {Ilya Loshchilov and Frank Hutter},
	year         = 2017,
	booktitle    = {International Conference on Learning Representations (ICLR)}
}

@inproceedings{zhang2018mixup,
	title        = {mixup: Beyond Empirical Risk Minimization},
	author       = {Hongyi Zhang and Moustapha Cisse and Yann N. Dauphin and David Lopez-Paz},
	year         = 2018,
	booktitle    = {International Conference on Learning Representations (ICLR)}
}

@inproceedings{deng2009imagenet,
	title        = {Imagenet: A large-scale hierarchical image database},
	author       = {Deng, Jia and Dong, Wei and Socher, Richard and Li, Li-Jia and Li, Kai and Fei-Fei, Li},
	year         = 2009,
	booktitle    = {Proceedings of the IEEE/CVF Conference on Computer Vision and Pattern Recognition (CVPR)},
	pages        = {248--255},
	organization = {Ieee}
}

@article{krizhevsky2009learning,
	title        = {Learning multiple layers of features from tiny images},
	author       = {Krizhevsky, Alex and Hinton, Geoffrey and others},
	year         = 2009,
	publisher    = {Toronto, ON, Canada}
}

@inproceedings{van2018inaturalist,
	title        = {The inaturalist species classification and detection dataset},
	author       = {Van Horn, Grant and Mac Aodha, Oisin and Song, Yang and Cui, Yin and Sun, Chen and Shepard, Alex and Adam, Hartwig and Perona, Pietro and Belongie, Serge},
	year         = 2018,
	booktitle    = {Proceedings of the IEEE/CVF Conference on Computer Vision and Pattern Recognition (CVPR)},
	pages        = {8769--8778}
}

@article{zhang2023deep,
	title        = {Deep long-tailed learning: A survey},
	author       = {Zhang, Yifan and Kang, Bingyi and Hooi, Bryan and Yan, Shuicheng and Feng, Jiashi},
	year         = 2023,
	journal      = {IEEE Transactions on Pattern Analysis and Machine Intelligence},
	publisher    = {IEEE}
}

@book{goodfellow2016deep,
	title        = {Deep learning},
	author       = {Goodfellow, Ian and Bengio, Yoshua and Courville, Aaron},
	year         = 2016,
	publisher    = {MIT press}
}

@inproceedings{Ren2020balms,
	title        = {Balanced Meta-Softmax for Long-Tailed Visual Recognition},
	author       = {Jiawei Ren and Cunjun Yu and Shunan Sheng and Xiao Ma and Haiyu Zhao and Shuai Yi and Hongsheng Li},
	year         = 2020,
	month        = {Dec},
	booktitle    = {Proceedings of Neural Information Processing Systems (NeurIPS)}
}

@inproceedings{Kang2020Decoupling,
	title        = {Decoupling Representation and Classifier for Long-Tailed Recognition},
	author       = {Bingyi Kang and Saining Xie and Marcus Rohrbach and Zhicheng Yan and Albert Gordo and Jiashi Feng and Yannis Kalantidis},
	year         = 2020,
	booktitle    = {International Conference on Learning Representations (ICLR)}
}

@article{cui2023reslt,
	title        = {Reslt: Residual learning for long-tailed recognition},
	author       = {Cui, Jiequan and Liu, Shu and Tian, Zhuotao and Zhong, Zhisheng and Jia, Jiaya},
	year         = 2023,
	journal      = {IEEE Transactions on Pattern Analysis and Machine Intelligence},
	publisher    = {IEEE},
	volume       = 45,
	number       = 3,
	pages        = {3695--3706}
}

@inproceedings{cao2019learning,
	title        = {Learning Imbalanced Datasets with Label-Distribution-Aware Margin Loss},
	author       = {Cao, Kaidi and Wei, Colin and Gaidon, Adrien and Arechiga, Nikos and Ma, Tengyu},
	year         = 2019,
	booktitle    = {Proceedings of Neural Information Processing Systems (NeurIPS)}
}

@inproceedings{openlongtailrecognition2019,
	title        = {Large-Scale Long-Tailed Recognition in an Open World},
	author       = {Liu, Ziwei and Miao, Zhongqi and Zhan, Xiaohang and Wang, Jiayun and Gong, Boqing and Yu, Stella X.},
	year         = 2019,
	booktitle    = {IEEE Conference on Computer Vision and Pattern Recognition (CVPR)}
}

@article{zhou2005training,
	title        = {Training cost-sensitive neural networks with methods addressing the class imbalance problem},
	author       = {Zhou, Zhi-Hua and Liu, Xu-Ying},
	year         = 2005,
	journal      = {IEEE Transactions on Knowledge and Data Engineering},
	publisher    = {IEEE},
	volume       = 18,
	number       = 1,
	pages        = {63--77}
}

@inproceedings{yang2022inducing,
	title        = {Inducing Neural Collapse in Imbalanced Learning: Do We Really Need a Learnable Classifier at the End of Deep Neural Network?},
	author       = {Yang, Yibo and Chen, Shixiang and Li, Xiangtai and Xie, Liang and Lin, Zhouchen and Tao, Dacheng},
	year         = 2022,
	booktitle      = {Proceedings of Neural Information Processing Systems (NeurIPS)},
	volume       = 35,
	pages        = {37991--38002}
}

@inproceedings{lin2017focal,
	title        = {Focal loss for dense object detection},
	author       = {Lin, Tsung-Yi and Goyal, Priya and Girshick, Ross and He, Kaiming and Doll{\'a}r, Piotr},
	year         = 2017,
	booktitle    = {Proceedings of the IEEE International Conference on Computer Vision (ICCV)},
	pages        = {2980--2988}
}

@inproceedings{peifeng2023feature,
	title        = {Feature Directions Matter: Long-Tailed Learning via Rotated Balanced Representation},
	author       = {Peifeng, Gao and Xu, Qianqian and Wen, Peisong and Yang, Zhiyong and Shao, Huiyang and Huang, Qingming},
	year         = 2023,
	booktitle    = {International Conference on Machine Learning (ICML)},
	organization = {PMLR}
}

@inproceedings{wang2020long,
	title        = {Long-tailed Recognition by Routing Diverse Distribution-Aware Experts},
	author       = {Wang, Xudong and Lian, Long and Miao, Zhongqi and Liu, Ziwei and Yu, Stella},
	year         = 2021,
	booktitle    = {International Conference on Learning Representations (ICLR)}
}

@article{chawla2002smote,
	title        = {SMOTE: synthetic minority over-sampling technique},
	author       = {Chawla, Nitesh V and Bowyer, Kevin W and Hall, Lawrence O and Kegelmeyer, W Philip},
	year         = 2002,
	journal      = {Journal of Artificial Intelligence Research},
	volume       = 16,
	pages        = {321--357}
}

@inproceedings{menon2021longtail,
	title        = {Long-tail learning via logit adjustment},
	author       = {Aditya Krishna Menon and Sadeep Jayasumana and Ankit Singh Rawat and Himanshu Jain and Andreas Veit and Sanjiv Kumar},
	year         = 2021,
	booktitle    = {International Conference on Learning Representations (ICLR)}
}

@article{cui2023generalized,
	title        = {Generalized parametric contrastive learning},
	author       = {Cui, Jiequan and Zhong, Zhisheng and Tian, Zhuotao and Liu, Shu and Yu, Bei and Jia, Jiaya},
	year         = 2023,
	journal      = {IEEE Transactions on Pattern Analysis and Machine Intelligence},
	publisher    = {IEEE}
}

\end{document}